\DeclarePairedDelimiter\abs{\lvert}{\rvert}
\newtheorem{assumption}{Assumption}
\newtheorem{lemma}{Lemma}
  \newtheorem{theorem}{Theorem}
  \newtheorem{prop}{Proposition}
\newtheorem{definition}{Definition}
\newtheorem{property}{Property}  
\newcommand{\prob}{\mathbb{P}}
\newcommand{\floor}[1]{\lfloor #1 \rfloor}
\newcommand{\summ}[2]{\sum_{#1}^{#2}}
\newcommand{\nat}{\mathcal{N}^\star}
\newcommand{\texit}[1]{\textit{#1}}
\begin{document}

\title{Observational Learning with a Budget}

\author{
  \IEEEauthorblockN{Shuo Wu}
  \IEEEauthorblockA{ECE Department\\
                    University of Illinois Chicago\\
                    Email: swu99@uic.edu}
  
  \and
  \IEEEauthorblockN{Pawan Poojary and Randall Berry}
  \IEEEauthorblockA{ECE Department\\
  Northwestern University, Evanston IL\\
Email: \{pawanpoojary2018@u, rberry@\}.northwestern.edu}
}

\maketitle

\begin{abstract}
We consider a model of Bayesian observational learning in which a sequence of agents receives a private signal about an underlying binary state of the world. Each agent makes a decision based on its own signal and its observations of previous agents.  A central planner seeks to improve the accuracy of these signals by allocating a limited budget to enhance signal quality across agents. We formulate and analyze the budget allocation problem and propose two optimal allocation strategies. At least one of these strategies is shown to maximize the probability of achieving a correct information cascade.
\end{abstract}

\section{Introduction}
\label{sec:intro}


Consider that an item, which could either be of a ``good'' or a ``bad'' quality, is up for sale in a market where agents arrive sequentially and decide whether to buy the item, with their choice serving as a recommendation for later agents. While the quality of the item is unknown to the agents, every agent has its own prior knowledge of the item’s quality in the form of its private belief. Each agent then makes a pay-off optimal decision based on its own prior knowledge and by observing the choices of its predecessors. Such models of ``observational learning'' were first studied by \cite{bhw,banerjee,welch} under a Bayesian learning framework wherein each agent's prior knowledge is in the form of a privately observed signal about the pay-off-relevant state of the world, which in this case is the item's quality, and is generated from a commonly known probability distribution. A salient feature of such models is the emergence of \emph{information cascades} or \emph{herding}, i.e., at some point, it is optimal for an agent to ignore its own private signal and follow the actions of the past agents. Subsequent agents then follow suit due to their homogeneity. As a result, from the onset of a cascade, the agents' actions do not reveal any information conveyed to them by their private signals; hence learning stops. In this way, the phenomenon of cascading could impede the agents from learning the socially optimal (correct) choice, i.e., the agents could herd to a wrong cascade.


The basic Bayesian learning models in \cite{bhw,banerjee,welch} considered a binary state-space for the underlying state of the world as well as a binary signal-space for agents' private signals, where an agent's
signal can be viewed as the result of passing the true state over a binary channel.  In this paper, we consider a similar model, where additionally, we consider that a fixed \emph{budget} is provided for improving agents' private signal qualities, i.e. for making the channel ``less noisy.''  For example, this could model a setting where limited resources such as time and money could be spent researching the underlying quality of the item and sharing the results with the agents, resulting in them having a better chance of receiving the correct private signal. Our problem is that of ``information design'', where an entity  must allocate the budget towards improving each of the two signal qualities in a manner that maximizes the probability of a correct cascade.  
We characterize the optimal use of this budget. 

In related work, \cite{itai2023} considers a similar entity that designs the information structure for the agents’ private signals, but instead is self-interested and aims to maximize its own utility, which may not coincide with maximizing the agents' chances of optimal learning. Moreover, unlike in \cite{itai2023}, our entity is given an information structure to begin with, which reflects signal qualities in the current market and is only allowed to improve them, subject to budget constraints. The work in \cite{smithNsorensen} extended the basic models in \cite{bhw,banerjee,welch} by allowing for a continuous signal-space  with signals having unbounded likelihood ratios, and showed that this could result in learning to occur with probability one. Our model maintains the assumptions of \cite{bhw,banerjee,welch}, i.e., discrete bounded private signals, which always leads to a positive probability that learning fails. Further, as in \cite{bhw,banerjee,welch}, we consider a setting in which all agents can perfectly observe all prior actions. Whereas, other variations of this basic model include assuming agents do not observe all previous agents’ actions \cite{acemouglu,song}, allowing for imperfect observations \cite{Tho,poojary2020,pawanWiopt2023}, and others \cite{pawan_congestion_2022,VijayShih2018,Achilleas2022}. 

Our approach requires characterizing the probability that agents enter into a correct cascade as a function of the private signal qualities. As a part of this analysis, we also show an interesting property that the function behaves differently based on the rationality of an underlying constant. Due to this peculiar property, assigning all available budget for improving the signal qualities may be a strictly suboptimal strategy, somewhat counterintuitively.
\section{Preliminaries}

\subsection{Model}

We consider a model similar to \cite{bhw} in which there is a countable sequence of agents, indexed $i = 1,2, \ldots$ where the index represents both the time and the order of actions. Each agent $i$ takes a publicly observable action $A_i$ of either buying $(Y)$ or not buying $(N)$ a new item that has an unknown true value $(V)$, which is equally likely to be either good $(G)$ or bad $(B)$. Each agent $i$ receives a pay-off of $v>0$ if $A_i=Y$ and $V=G$ and a pay-off of $-v$ if $A_i=Y$ and $V=B$. If $A_i=N,$ its pay-off is zero.

To incorporate agents' private beliefs about the new item, every agent $i$ receives a private signal $S_i \in \{ H \, \text{(high)}, L \, \text{(low)}\}$. This signal, as shown in Figure \ref{BC}, partially reveals the information about the true value of the item through a binary channel with $p_1 \triangleq \prob (H \vert G)$ and $p_2 \triangleq \prob (L  \vert B)$, respectively, denoting the probabilities of receiving the correct signal when $V= G$ and when $V= B$. We refer to the ordered pair $(p_1,p_2)$ as the signal \emph{qualities}. Here, we assume $\sfrac{1}{2} < p_1, p_2 < 1$, which implies that the signal is informative but not revealing. Moreover, the sequence of private signals $\{S_1,S_2,\ldots\}$ is assumed to be \emph{i.i.d.} given the true value $V$. Each agent $i$ seeks to maximize its expected pay-off given its private signal $S_i$ and the history of past observations $ \mathcal{H}_{i-1} := \{A_1,A_2, \ldots, A_{i-1} \}$.


\begin{figure}[ht!]
\centering
\begin{tikzpicture}[scale=1.4]
\draw [decoration={markings,mark=at position 1 with {\arrow[scale=2,>=stealth]{>}}},postaction={decorate}] (0,0) -- (1.5,0);
\draw [decoration={markings,mark=at position 1 with {\arrow[scale=2,>=stealth]{>}}},postaction={decorate}] (0,-1) -- (1.5,-1);
\draw [decoration={markings,mark=at position 1 with {\arrow[scale=2,>=stealth]{>}}},postaction={decorate}] (0,0) -- (1.5,-1);
\draw [decoration={markings,mark=at position 1 with {\arrow[scale=2,>=stealth]{>}}},postaction={decorate}] (0,-1) -- (1.5,0);
\node at (1.65,0) {$H$};
\node at (1.65,-1) {$L$};
\node at (-0.15,0) {$G$};
\node at (-0.15,-1) {$B$};
\node at (-0.35,-0.5) {$V$};
\node at (1.85,-0.5) {$S_i$};
\node at (0.75,0.15) {\scalebox{0.85}{$p_1$}};
\node at (0.75,-1.15) {\scalebox{0.85}{$p_2$}};
\node at (0.58,-0.24) [rotate= -33.7] {\scalebox{0.83}{$1-p_1$}};
\node at (0.58,-0.78) [rotate= 33.7] {\scalebox{0.83}{$1-p_2$}};
\end{tikzpicture}
\caption{\small The channel through which agents receive private signals.}
\label{BC}
\end{figure}
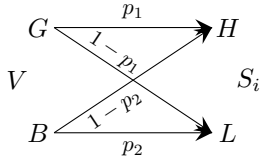

By allowing for arbitrary values of $p_1,p_2 \in (0.5,1)$, our model generalizes similar models studied in past works \cite{bhw,banerjee,Tho,poojary2020,pawan_congestion_2022,pawanWiopt2023,le2018,peres2017,Wu2015,Achilleas2022}, which assume $p_1 = p_2$ and are thereby restricted to symmetric information channels.

\subsection{Optimal decision and cascades} 
For the $n^{\text{th}}$ agent, the history of past actions $\mathcal{H}_{n-1} $ and its private signal $S_n$ form its information set $\mathbb{I}_n := \{S_n, \mathcal{H}_{n-1} \}$. As the first agent does not have any observation history, he always follows his private signal, i.e., he buys the item if and only if the signal is $H$. For agent $n \geq 2$, the Bayes' optimal action, $A_n$ is chosen according to the hypothesis ($V=G$ or $B$) that has the higher posterior probability given the agent's information set $\mathbb{I}_n$. Let $\gamma_n (S_n, \mathcal{H}_{n-1} ) \triangleq \mathbb{P} (V=G \vert S_n, \mathcal{H}_{n-1} )$ denote the posterior probability for the item being good. Then, the Bayes' optimal decision rule is
\begin{align}
A_n = \begin{cases} 
Y, \quad & \text{if} \; \; \gamma_n  > 1 / 2, \\
N, \quad & \text{if} \; \; \gamma_n  < 1 / 2, \\
\text{follows}\;\; S_n,  \quad & \text{if} \; \; \gamma_n = 1 / 2.
\end{cases} 
\label{bayes_decison}
\end{align}
Note that when $\gamma_n = 1/2$, an agent is indifferent between the actions. Similar to prior works \cite{banerjee,Tho,poojary2020,pawan_congestion_2022,le2018}, our decision rule in this case follows the private signal $S_n$, unlike \cite{bhw} where they employ random tie-breaking.

\begin{definition}
An information cascade is said to occur when an agent's decision becomes independent of its private signal.  
\end{definition}

It follows from \eqref{bayes_decison} that, agent $n$ cascades to a $Y$ $(N)$ if and only if $\gamma_n  > 1 / 2 $ $( < 1/2)$ for all $S_n \in \{ H,L \}$. The other case being $\gamma_n  > 1 / 2$ for $S_n=H$ and less than $1/2$ for $S_n=L$; in which case, agent $n$ follows $S_n$. A more intuitive way to express this condition is by using the information contained in the history $\mathcal{H}_{n-1}$ observed by agent $n$ in the form of the public likelihood ratio $l_{n-1} (\mathcal{H}_{n-1}) \triangleq \mathbb{P}(\mathcal{H}_{n-1} \vert B) / \mathbb{P}(\mathcal{H}_{n-1} \vert G) $ as follows.

\begin{lemma}
Agent $n$ cascades to a \textit{Yes} \, (\textit{No}) action if and only if $l_{n-1} < \frac{1-p_1}{p_2} $$\big( l_{n-1} \!>\!\! \frac{p_1}{1-p_2} \big)$\! and otherwise follows its private signal $S_n$.
\label{lemma1}
\end{lemma}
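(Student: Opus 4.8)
The plan is to reduce both cascade conditions to inequalities on the public likelihood ratio $l_{n-1}$ by writing the posterior $\gamma_n$ via Bayes' rule. First I would use that, conditioned on the true value $V$, the private signal $S_n$ is independent of the history $\mathcal{H}_{n-1}$: the latter is a deterministic function of the earlier signals $S_1,\dots,S_{n-1}$, which are i.i.d.\ given $V$, so $S_n \perp \mathcal{H}_{n-1} \mid V$. This factors the joint likelihood, and with the uniform prior $\prob(G)=\prob(B)=1/2$,
\[
\gamma_n = \frac{\prob(S_n\mid G)\,\prob(\mathcal{H}_{n-1}\mid G)}{\prob(S_n\mid G)\,\prob(\mathcal{H}_{n-1}\mid G)+\prob(S_n\mid B)\,\prob(\mathcal{H}_{n-1}\mid B)}.
\]
Dividing numerator and denominator shows that the decision threshold $\gamma_n>1/2$ is equivalent to $\lambda(S_n)\,l_{n-1}<1$, where $\lambda(S_n):=\prob(S_n\mid B)/\prob(S_n\mid G)$ is the private-signal likelihood ratio.

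Next I would evaluate $\lambda$ on each signal using the channel of Figure~\ref{BC}: $\lambda(H)=(1-p_2)/p_1$ and $\lambda(L)=p_2/(1-p_1)$, and since $p_1,p_2>1/2$ we get $\lambda(H)<1<\lambda(L)$. Hence $\gamma_n$ is larger under $S_n=H$ than under $S_n=L$, so it suffices to test the extreme signals. A \textit{Yes} cascade requires buying for \emph{both} realizations, so the binding case is the less favorable signal $S_n=L$; the agent buys there exactly when $\lambda(L)\,l_{n-1}<1$, which rearranges to $l_{n-1}<(1-p_1)/p_2$. Symmetrically, a \textit{No} cascade requires declining even on the favorable $S_n=H$, i.e.\ $\lambda(H)\,l_{n-1}>1$, giving $l_{n-1}>p_1/(1-p_2)$.

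It then remains to identify the complementary ``otherwise'' regime. Using $p_1+p_2>1$ one checks $(1-p_1)/p_2<p_1/(1-p_2)$, so the thresholds are ordered and the intermediate band $(1-p_1)/p_2\le l_{n-1}\le p_1/(1-p_2)$ is nonempty; there the agent buys on $H$ and declines on $L$, i.e.\ follows $S_n$. The one point demanding care — which I expect to be fiddly rather than genuinely hard — is the boundary behavior, because the indifference rule of \eqref{bayes_decison} makes the agent follow $S_n$ when $\gamma_n=1/2$ rather than randomize. This is precisely what forces the cascade inequalities to be strict while the follow-signal band is closed: at $l_{n-1}=(1-p_1)/p_2$ the agent is indifferent on $L$ and, following $L$, declines, so it is not a \textit{Yes} cascade; at $l_{n-1}=p_1/(1-p_2)$ it buys on the indifferent $H$, so it is not a \textit{No} cascade. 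Tracking these tie cases consistently yields exactly the strict thresholds stated.
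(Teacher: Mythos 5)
Your proposal is correct and follows the same route the paper indicates for Lemma~\ref{lemma1}: expressing $\gamma_n$ in terms of $l_{n-1}$ via Bayes' law (using conditional independence of $S_n$ and $\mathcal{H}_{n-1}$ given $V$) and then applying the cascade condition signal by signal. Your explicit treatment of the tie-breaking at $\gamma_n=1/2$, which pins down why the cascade inequalities are strict, is a careful elaboration of a detail the paper leaves implicit rather than a different argument.
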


This follows by expressing $\gamma_n$ in terms of $l_{n-1}$ using Bayes' law, and then using the condition on $\gamma_n$ for a $Y \, (N)$ cascade. If agent $n$ cascades, then its action $A_n $ does not provide any additional information about the true value $V$ to the successors over what is contained in $\mathcal{H}_{n-1}$. As a result, $l_{n+i} = l_{n-1} $ for all $i=0,1,2,\ldots$ and hence they remain in the cascade, which leads us to the following property, also exhibited by prior models \cite{bhw,banerjee,welch,Tho,poojary2020,pawanWiopt2023}.
\begin{property}
Once a cascade occurs, it lasts forever.
\label{prop1}
\end{property}

On the other hand, if agent $n$ does not cascade, then Property \ref{prop1} and Lemma \ref{lemma1} imply that all the agents $i\leq n$ follow their own private signals ignoring the observations of their predecessors. Mutual independence between the private signals results in the likelihood ratio $l_n$ depending only on the number of $Y$'s ($n_Y$) and $N$'s ($n_N$) in the observation history $\mathcal{H}_{n}$. Specifically, $l_n = \big( \frac{p_2}{1-p_1} \big)^{h_n}$ where 
\begin{align}
h_n =& \; n_N - a n_Y, \label{h_n}  \\ 
a :=& \; \log \Big( {\small \text{$\frac{p_1}{1-p_2}$}} \Big) / \log \Big( { \small \text{$ \frac{p_2}{1-p_1}$}} \Big). \notag
\end{align}
We refer to $a$ as the \emph{cascade constant}. We summarize this in the following property.
\begin{property}
Until a cascade occurs, each agent follows its private signal and $h_n$ defined in \eqref{h_n} is a sufficient statistic of the information contained in the past observations.
\end{property}
Note that if $p_1 = p_2$ (symmetric channel), then $a = 1$ and $h_n$ in \eqref{h_n} is the {\it unweighted} difference, $n_N- n_Y,$ which is the case for the models in \cite{bhw,banerjee,Tho}. Interestingly, if the signal quality for $V=G$ is less than for $V=B$, i.e., if $p_1 < p_2$, then $a > 1$,  which implies that the information conveyed by a $Y$ in the observation history is greater by a factor of $a$ compared to that conveyed by a $N$. Whereas, $a < 1$ if $p_1 > p_2$.

From Lemma \ref{lemma1}, it follows that for all times $n$ until a cascade occurs, $-1 \leq h_n \leq a$, and the update rule for $h_n$ is given by
\begin{align}
h_n = \begin{cases} h_{n-1} + 1 \quad & \text{if} \;\; A_n = N,\\
h_{n-1}-a  \quad & \text{if} \;\; A_n = Y.
\end{cases}
\label{mc_update}
\end{align}
Once $h_n >a $ ($ < -1 $), a $Y \,(N)$ cascade begins and $h_n$ stops updating (Property \ref{prop1}). A cascade is \emph{correct} if the cascade action is optimal for the realized value of $V \in \{G,B\}$. Otherwise, it is \emph{wrong}. 
Now, given $V,$ let the probability that a $Y$ $(N)$ cascade begins be denoted by $\mathbb{P}(Y_{\text{cas}} \, \vert \, V)$ $\big( \mathbb{P}(N_{\text{cas}} \, \vert \, V) \big)$. Here, $ \mathbb{P}(Y_{\text{cas}} \, \vert \, V) = 1- \mathbb{P}(N_{\text{cas}} \, \vert \, V)$ as it can be shown that the process $\{h_n\}$ described in \eqref{mc_update} exits the range $[-1,a]$ w.p. $1$. Given that both $V$'s are equiprobable, the unconditional probability of a correct cascade, denoted by $\mathbb{P}_{\text{cc}}(p_1, \, p_2)$  is given by
\begin{align}
\label{eq:cc}
\mathbb{P}_{\text{cc}}(p_1, \, p_2) =  \textstyle  \frac{1}{2}\mathbb{P}(Y_{\text{cas}} \, \vert \, G) + \frac{1}{2}\mathbb{P}(N_{\text{cas}} \,\vert\, B).
\end{align}
 
\subsection{Optimal Budget Allocation Problem}

Consider the problem where a fixed budget $b$ is provided for improving the signal qualities, where the objective is to maximize the probability of a correct cascade. Denote the new signal qualities by $p_1' := p_1 + c_1$ and $ p_2' := p_2 + c_2$, where the respective quality improvements, $c_1,c_2 \geq 0$, must satisfy the budget constraint: $c_1 + c_2 \leq b$. Additionally, the signal qualities may not be improved to the point where either of them become fully revealing, i.e., $p_1', p_2' \neq 1$, as in such cases learning occurs with certainty. Then, the optimization problem can be stated as follows.
\begin{equation}
\begin{aligned}
\text{Find} \;\;\, (c_1^*,c_2^*) := \underset{(c_1,\,c_2)}{\arg \max} & \;\, \mathbb{P}_{\text{cc}}(\,p_1+c_1, \, p_2+c_2) \\
\text{s.t.} & \begin{cases} 
\, c_1+c_2 \leq b, \\
\, 0 \leq c_1 < (1-p_1), \\
\, 0 \leq c_2 < (1-p_2).
\end{cases}
\end{aligned} 
\label{eq:optimization}
\end{equation}
There is an assumption we want to stress as follows.
\begin{assumption}
    We assume $p_1 \leq p_2$ so that $a\geq 1$. Unless otherwise specified, we assume $p_1 < p_2$ in the theoretical results and discuss $p_1=p_2$ as a special case.
\end{assumption}
The assumption is without loss of generality because $p_1,p_2$ are indistinguishable in terms of their impacts on $\mathbb{P}_{\text{cc}}(\,p_1, \, p_2)$.

\section{Markovian Analysis of Cascade Probability}

Before addressing the optimization in (\ref{eq:optimization}), in this section we use a Markov chain formulation as in \cite{le2018} to analyze the probability of a correct cascade for an arbitrary choice of $p_1\leq p_2$. In the following section we will then use this to study the optimization of $p_1$ and $p_2$.

\subsection{Markovian Reformulation}



Conditioned on a given value of $V$, $\{h_n\}$ is a Markov chain. From the previous section, it follows that until a cascade  occurs, $h_n$ increases by 1 if $A_n= N$ and decreases by $a$ if $A_n=Y$, where the probability of these events corresponds to the probability of observing $L$ and $H$, respectively. If $h_n > a$ or $h_n < -1$, then a cascade occurs and $h_n$ stops evolving, corresponding to a $N$ or $Y$ cascade, respectively. Figure~\ref{random_walk} illustrates this Markov chain when $V=G$. 
 



\begin{figure}[h!]
\vspace{-0.5mm}
\centering
\begin{tikzpicture}[scale=1.65]
\def\a{1.5}

\draw [decoration={markings,mark=at position 1 with {\arrow[scale=2,>=stealth]{>}}},postaction={decorate}] (0,0) -- (2.45,0);
\draw [decoration={markings,mark=at position 1 with {\arrow[scale=2,>=stealth]{>}}},postaction={decorate}] (0,0) -- (-2.45,0);

\draw (0,0.05) -- (0,-0.05);
\draw (-\a,0.05) -- (-\a,-0.05);
\draw (\a,0.05) -- (\a,-0.05); 
\draw (-1,0.05) -- (-1,-0.05); 

\node at (0+0.1,-0.17) {$0$};
\node at (1,-0.15) {$1$};
\node at (2,-0.15) {$2$};
\node at (-1+0.15,-0.15) {$-1$};  
\node at (\a+0.1,-0.15) {$a$};     
\node at (-\a-0.1,-0.15) {$-a$}; 
\draw [->] (0,0) to [out=80,in=100] (1,0); 
\node at (0.5,0.4) {\scalebox{0.95}{$1-p_1$}}; 

\draw [->] (1,0) to [out=80,in=100] (2,0); 
\node at (1.5,0.4) {\scalebox{0.95}{$1-p_1$}}; 

\draw [->] (0,0) to [out=-110,in=-70] (-\a,0); 
\node at (-0.75,-0.56) {\scalebox{0.95}{$p_1$}};

\draw[dashed] (-1,-0.5) -- (-1,0.5); 
\draw[dashed] (\a,-0.5) -- (\a,0.5); 

\end{tikzpicture}
\setlength{\belowcaptionskip}{0pt}
\caption{\small Example transition diagram for $\{h_n\}$ when $V=G$.}  
\label{random_walk}
\end{figure}
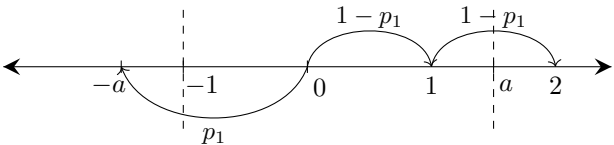

The following lemma show a specific structure of the sample paths for $\{h_n\}$.

\begin{lemma}\label{lemma: action region}
    For any $n$, suppose that the observation history $\mathcal{H}_{n-1}$ does not result in a cascade. Then, for any $m<n$, $A_m = Y$ only if $h_m\in [a-1,a]$. Similarly, $A_m = N$ only if $h_m\in [-1,a-1]$.
\end{lemma}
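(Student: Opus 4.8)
The plan is to read both inclusions straight off the one-step update rule \eqref{mc_update}, using only that the no-cascade hypothesis confines the walk to the closed interval $[-1,a]$. First I would restate the hypothesis quantitatively: ``$\mathcal{H}_{n-1}$ does not result in a cascade'' is equivalent to $-1 \le h_k \le a$ for every $k \le n-1$, since by Lemma~\ref{lemma1} a cascade is triggered precisely when the statistic leaves $[-1,a]$. In particular, for each $m<n$ both the value on which agent $m$ bases its action and the value produced by that action lie in $[-1,a]$. Here I read the lemma's $h_m$ as the state from which action $A_m$ is taken, the reading matching the ``action region'' label; the point of $m<n$ is exactly that agent $m$'s step does not itself end the no-cascade window, so the value after the step is also trapped in $[-1,a]$.

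For the $Y$ case, \eqref{mc_update} says a $Y$ action lowers the statistic by $a$, so the resulting value is $h_m-a$. Because $m<n$, this must satisfy $h_m-a \ge -1$ — otherwise the statistic would drop below $-1$ and trigger a $Y$ cascade, contradicting the hypothesis — whence $h_m \ge a-1$; combined with the standing upper bound $h_m \le a$ this gives $h_m \in [a-1,a]$. The $N$ case is the mirror image: an $N$ action raises the statistic by $1$, and the resulting value $h_m+1$ must satisfy $h_m+1 \le a$ (otherwise it exceeds $a$ and triggers an $N$ cascade), forcing $h_m \le a-1$; together with $h_m \ge -1$ this yields $h_m \in [-1,a-1]$.

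Each case is thus a single inequality manipulation, and the only real work lies in the first step: making precise that the no-cascade assumption constrains the statistic \emph{after} agent $m$'s step, not merely before it, for every $m<n$. I expect this indexing bookkeeping to be the main (and essentially only) obstacle. The clean way to secure it is to phrase the hypothesis as the uniform bound $-1 \le h_k \le a$ for all $k \le n-1$ and then isolate the pre-action state in the two branches of \eqref{mc_update}; once that equivalence is recorded, both action-region inclusions follow immediately.
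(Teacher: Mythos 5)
Your proof is correct and is essentially the paper's argument in contrapositive form: the paper notes that a \emph{No} step from $(a-1,a]$ or a \emph{Yes} step from below $a-1$ would cross a boundary and trigger a cascade, while you equivalently derive $h_m+1\le a$ and $h_m-a\ge -1$ from the requirement that the post-action state stay in $[-1,a]$. Your explicit handling of the pre-/post-action indexing and of why $m<n$ traps the post-step value is a welcome tightening of the paper's one-line sketch (which also contains a slip, writing $[0,a-1)$ where $[-1,a-1)$ is meant), but the underlying idea is identical.
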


This shows that the non-cascade region of the state space for $\{h_n\}$ can be divided into two intervals, that only overlap at the single point $a -1$. Except at this point there is only one action available that avoids entering into a cascade.  Further, we note that if $a$ is irrational, then $h_n$ will never equal $a-1$, while if $a$ is rational, this can occur. Hence, the calculation of the cascade probabilities differs in these two cases, which we look at next. 

\subsection{Irrational Cascade Constant}

If $a$ is irrational, then $h_n$ will never equal $a-1$ and so if there is only one possible observation history that does not end in a cascade at any time $n$. Furthermore, suppose an observation history results in a Y-cascade (crossing the left boundary) with a fixed number of \texit{No} actions, then the number of \texit{Yes} actions in the observation history is unique. 


Such a property facilitates the calculation of the correct cascade probability. We denote the probability of a $Y$-cascade occurring with exactly $i$ \textit{No} actions in the observations as $\prob^N_i(Y_{\mathrm{cas}} \,\vert\, G)$ when $V=G$. Similarly, when $V=B$, we denote this probability as $\prob^N_i(Y_{\mathrm{cas}} \,\vert\, B)$. By definition, the probabilities of a Y-cascade given the true state $V$ are expressed as:
\[
\prob(Y_{\mathrm{cas}} \mid G) = \sum_{i=0}^{\infty} \prob^N_i(Y_{\mathrm{cas}} \mid G), 
\]
\[
\prob(Y_{\mathrm{cas}} \mid B) = \sum_{i=0}^{\infty} \prob^N_i(Y_{\mathrm{cas}} \mid B).
\]
We now present the formulas for calculating these conditional probabilities. Note that given these probabilities, the correct cascade probability then follows from (\ref{eq:cc}) noting that $ \prob(N_{\mathrm{cas}} \mid B) = 1 -  \prob(Y_{\mathrm{cas}} \mid B)$.

\begin{prop} \label{coro: prob of CC}
Given $p_1, p_2$, when $a$ is an irrational number, 
\begin{align*}
\prob(Y_\mathrm{cas} \mid G) &= \sum_{i=0}^{\infty} (1 - p_1)^i p_1^{k_i} \;\; \text{and} \\
\prob(Y_{\mathrm{cas}} \mid B) &= \sum_{i=0}^{\infty} p_2^i (1 - p_2)^{k_i},
\end{align*}
where $k_i = \floor{\frac{i+1}{a}}+1$.
\end{prop}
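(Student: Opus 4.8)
The plan is to exploit the deterministic ``skeleton'' that the irrationality of $a$ forces on every non-cascade sample path, reducing each probability to a single path per value of $i$. First I would record the consequence of Lemma~\ref{lemma: action region}: at a non-cascade state $h\in[-1,a]$ the action $N$ keeps the walk inside $[-1,a]$ exactly when $h\le a-1$, and $Y$ does so exactly when $h\ge a-1$, so the two ``safe'' regions overlap only at the single point $a-1$. Every reachable state has the form $j-ak'$ with $j,k'\in\mathbb{Z}_{\ge 0}$, and $j-ak'=a-1$ would force $a=(j+1)/(k'+1)\in\mathbb{Q}$; irrationality of $a$ therefore guarantees $a-1$ is never visited. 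Hence at each reachable non-cascade state exactly one action continues without cascading and the other triggers a cascade, so continuing traces a single deterministic trajectory $0=h^{(0)},h^{(1)},\dots$ with $h^{(t+1)}=h^{(t)}+1$ when $h^{(t)}<a-1$ (an ``$N$-move'') and $h^{(t+1)}=h^{(t)}-a$ when $h^{(t)}>a-1$ (a ``$Y$-move''). In particular, for any prescribed number of \texit{No} actions there is at most one non-cascade ordering, hence at most one $Y$-cascade path with exactly $i$ \texit{No}'s.

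Next I would pin down the number of \texit{Yes} actions. On a $Y$-cascade path with $i$ \texit{No}'s and $k$ \texit{Yes}'s the triggering (final) action is a $Y$, so the state just before it is $s=i-a(k-1)$ and the state after is $s-a<-1$. Since the path has not cascaded earlier, $s\in[-1,a]$, and since the last $Y$ does cascade, $s<a-1$; together $s\in[-1,a-1)$. Substituting $s=i-a(k-1)$ gives $(i+1)/a<k\le (i+1)/a+1$, an interval of length one whose endpoints are irrational, so it contains the single integer $k=\floor{(i+1)/a}+1=:k_i$. This simultaneously forces the \texit{Yes}-count on any such path to equal $k_i$ and identifies $k_i$ with the formula in the statement.

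It then remains to establish existence, i.e.\ that every $i\ge 0$ is realized. Here I would track $n_N(t)$, the number of $N$-moves among the first $t$ steps of the deterministic trajectory. An $N$-move occurs precisely from an ``$N$-region'' state ($h^{(t)}<a-1$), which is also the only kind of state from which a $Y$-cascade can be launched; moreover $n_N$ increases by exactly one between consecutive $N$-region steps, since the move leaving an $N$-region state is an $N$-move while the intervening $Y$-moves leave $n_N$ unchanged. Because the walk is bounded below by $-1$ it cannot take $Y$-moves forever, so there are infinitely many $N$-moves and $n_N(t)\to\infty$; starting from $n_N=0$ at $t=0$ (here $0<a-1$ as $a>1$), the values of $n_N$ at the $N$-region steps run through $0,1,2,\dots$ bijectively. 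Thus for each $i$ there is exactly one $Y$-cascade path, carrying $i$ \texit{No}'s and $k_i$ \texit{Yes}'s. Finally, under $V=G$ a \texit{No} (signal $L$) has probability $1-p_1$ and a \texit{Yes} (signal $H$) has probability $p_1$, so this unique path has probability $(1-p_1)^i p_1^{k_i}$; summing these disjoint events over $i$ yields $\prob(\Ycas\mid G)$. The ordering constraints depend only on $a$, not on $V$, so the same paths give $\prob(\Ycas\mid B)=\sum_i p_2^i(1-p_2)^{k_i}$, using that under $V=B$ a \texit{No} has probability $p_2$ and a \texit{Yes} has probability $1-p_2$.

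I expect the existence/surjectivity step to be the main obstacle rather than the arithmetic fixing $k_i$: one must argue carefully that the deterministic trajectory reaches an $N$-region state with exactly $i$ \texit{No}'s for \emph{every} $i$ (equivalently, that $n_N$ increments by exactly one between successive $N$-region steps and grows without bound), so that no index is skipped and the summation in the statement is justified.
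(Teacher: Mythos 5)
Your proof is correct and follows essentially the same route as the paper's: decompose the $Y$-cascade event by the number of \textit{No} actions, use the irrationality of $a$ (via Lemma~\ref{lemma: action region} and the fact that $h_n$ never hits $a-1$) to get a unique path per \textit{No}-count, and pin down $k_i$ from the two boundary inequalities $i-ak<-1$ and $i-a(k-1)\geq -1$. Your additional surjectivity argument---that the deterministic non-cascade trajectory visits an $N$-region state with exactly $i$ \textit{No}'s for \emph{every} $i\geq 0$, so no term in the sum is vacuous---is a point the paper's proof leaves implicit, and it is a worthwhile tightening rather than a different approach.
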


In the following sections, we assume $k_i = \floor{\frac{i+1}{a}}+1$ and do not repeat this.

\subsection{Rational Cascade Constant}

When $a$ is rational, given a fixed number of \textit{No} actions, multiple sample paths may lead to a $Y$-cascade, as $h_n$ can move either left or right at the point $a - 1$. This differentiates the rational case from the irrational one. Consequently, we analyze the problem in the rational case iteratively.

The heuristic idea is as follows. If $h_n$ never reaches $a-1$, the possible observation history remains unique at each time $m\leq n$, and so the cascade probability can be found as in the irrational case. Suppose that $h_n$  starts at $a - 1$ and consider three possible outcomes: crossing the left boundary, crossing the right boundary, or returning to the starting point $a - 1$. If $h_n$ returns to the starting point, the probability of crossing the left or right boundary remains unchanged from the initial state. Otherwise, this probability contributes to the probability of either a $Y$-cascade or a $N$-cascade.

The following proposition presents the resulting formulas.

\begin{prop}\label{prop: irrational prob}
    Suppose that $a$ is rational and satisfies $a=r/q$, for integer values of $r$ and $q$ with no common factors, then 
    \begin{align}
        \prob(Y_{\mathrm{cas}} \mid G) =  \frac{\summ{i=0}{r-1}(1-p_1)^i p_1^{k_i}}{1 - 2p_1^{q}(1-p_1)^{r}}\label{eq: Y-cascade probability rational}, \\
        \prob(Y_{\mathrm{cas}} \mid B) = \frac{\sum_{i=0}^{r-1}{ p_2^i (1-p_2)^{k_i}}}{1 - 2(1-p_2)^q p_2^{r}}. \label{eq: N-cascade probability rational}
    \end{align}
\end{prop}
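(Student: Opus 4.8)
The plan is to analyze the conditioned chain $\{h_n\}$ of \cref{mc_update} started at $h_0=0$ under $V=G$, exploiting a renewal at the start state together with \cref{lemma: action region}; the case $V=B$ will follow by the substitution $1-p_1\mapsto p_2$, $p_1\mapsto 1-p_2$, which simply swaps the two move probabilities. Under $V=G$ the dynamics split exactly as in \cref{lemma: action region}: from any state strictly below $a-1$, an $N$-action (probability $1-p_1$) is the unique non-cascading move while a $Y$-action (probability $p_1$) triggers a $Y$-cascade; from any state strictly above $a-1$, a $Y$-action (probability $p_1$) is the unique non-cascading move while an $N$-action (probability $1-p_1$) triggers an $N$-cascade; and $a-1$ is the unique state where both moves stay in $[-1,a]$. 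Thus $a-1$ is the single branch point, and from $0$ there is a forced (unique) path of continuation moves leading to $a-1$.

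First I would establish the loop structure. Any excursion that leaves $0$ and returns to $0$ without cascading has zero net displacement, so its move counts satisfy $\#N=a\,\#Y=(r/q)\#Y$; since $\gcd(r,q)=1$, the minimal such excursion uses exactly $r$ $N$-moves and $q$ $Y$-moves. Because it passes through the branch point $a-1$ exactly once, there are precisely two clean return excursions (leaving $a-1$ by an $N$-move, or by a $Y$-move), each of probability $p_1^{\,q}(1-p_1)^{r}$. Hence the probability of returning to $0$ before any cascade is $\rho=2\,p_1^{\,q}(1-p_1)^{r}$, so that $1-\rho$ is exactly the denominator in \cref{eq: Y-cascade probability rational}. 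I would stress that this factor of $2$ is the essential departure from the irrational case: naively folding the series of \cref{coro: prob of CC} via the periodicity $k_{i+r}=k_i+q$ would instead give denominator $1-p_1^{\,q}(1-p_1)^{r}$, which is wrong precisely because it ignores the second branch.

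Next I would set up the first-passage equation at $0$, realizing the three-outcome renewal idea anchored at the initial state: by the strong Markov property, $\prob(\Ycas\mid G)=A+\rho\,\prob(\Ycas\mid G)$, where $A$ is the probability of a $Y$-cascade before the first return to $0$; solving gives $\prob(\Ycas\mid G)=A/(1-\rho)$. It then remains to identify $A=\summ{i=0}{r-1}(1-p_1)^i p_1^{k_i}$. Here I would argue that the $Y$-cascades occurring before the first return to $0$ are in bijection with the number $i$ of $N$-moves used, which ranges over $\{0,1,\dots,r-1\}$ (strictly fewer than the $r$ $N$-moves of a full loop); for each such $i$ the non-cascading prefix is forced, terminates in a $Y$-action causing the cascade, and uses exactly $k_i=\floor{(i+1)/a}+1$ $Y$-moves, so its probability is $(1-p_1)^i p_1^{k_i}$, identical to the corresponding term of \cref{coro: prob of CC}. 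Summing over $i$ yields $A$ and hence \cref{eq: Y-cascade probability rational}; \cref{eq: N-cascade probability rational} follows verbatim under the stated substitution, producing numerator $\summ{i=0}{r-1}p_2^i(1-p_2)^{k_i}$ and denominator $1-2(1-p_2)^q p_2^{\,r}$.

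The main obstacle is the combinatorial bookkeeping around the branch point: proving rigorously that there are exactly two clean return loops of probability $p_1^q(1-p_1)^r$ each, and that the pre-return $Y$-cascades are indexed precisely by $i\in\{0,\dots,r-1\}$ with the stated per-path probabilities (in particular, that each such $i$ yields a unique cascading prefix, some passing through a branch at $a-1$ and some not). Both facts rely on \cref{lemma: action region} to guarantee that, away from $a-1$, the non-cascading continuation is unique, so that each excursion is determined by its branch choice at the single per-loop visit to $a-1$. As a cross-check and a more mechanical alternative, I would note that one can instead write the first-passage probabilities over the finite set of non-cascade states as a linear system and solve it by back-substitution around the cycle; this telescopes to the same closed form and independently reproduces the factor of $2$ in the denominator.
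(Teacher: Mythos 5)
Your proposal is correct and follows essentially the same route as the paper's proof: both identify the forced (unique) non-cascading path from $0$ to the single branch point $a-1$ (reached with $r-1$ \textit{No} and $q-1$ \textit{Yes} actions), obtain the factor of $2$ in the denominator from the two two-step returns to the origin from $a-1$, set up the same renewal equation at the origin, and identify the numerator as the sum of the forced cascading prefixes with $i\in\{0,\dots,r-1\}$ \textit{No} actions. The linear-system cross-check you mention is a nice supplementary sanity check but is not needed.
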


Note that the numerators in (\ref{eq: Y-cascade probability rational}) and (\ref{eq: N-cascade probability rational}) correspond to the sum of the first $r-1$ terms in a geometric sequence. \footnote{Since $a>1$, it must be that $r>1$, and hence these contain at least one term.}  In the case of $V=G$, these geometric terms are  $(1 - p_1)^i p_1^{k_i}$ for $i = 1, \ldots, r - 1$, which correspond to the probability of $i$ \textit{No} actions and $k_i$ \textit{Yes} actions.  The term $2 p_1^q (1 - p_1)^r$ in the denominator is twice the probability of returning to the point $a - 1$. This aligns with the previous heuristic argument: starting from $a - 1$, $h_n$ either returns to $a - 1$ or crosses the boundary.

Note that as $r \rightarrow \infty$, equations~(\ref{eq: Y-cascade probability rational}) and (\ref{eq: N-cascade probability rational}) converge to their corresponding probabilities in the irrational case. A natural question arises: can we simplify the problem by analyzing only the formula for the irrational case? The answer is yes, provided that small errors are acceptable and that finite values of the cascade constant are excluded.

\begin{prop}\label{prop: irrational case approaches rational case}
    We use the subscript "irra" to denote the probability computed using the formula for an irrational $a$ and "ra" to denote the probability computed using the formula for a rational value of $a$. For any \( \epsilon > 0 \),  except for at most  $N \leq \log_2((1/\epsilon)^2)$ values of \( a \), 
    \begin{align*}
        \abs*{\prob_{\text{ra}}(Y_{\mathrm{cas}} \mid G) - \prob_{\text{irra}}(Y_{\mathrm{cas}} \mid G)} < \epsilon,\\
        \abs*{\prob_{\text{ra}}(Y_{\mathrm{cas}} \mid B) - \prob_{\text{irra}}(Y_{\mathrm{cas}} \mid B)} < \epsilon.
    \end{align*}
\end{prop}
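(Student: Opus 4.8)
The plan is to reduce everything to a single algebraic discrepancy between the two formulas and then to count how often that discrepancy is large. First I would evaluate the \emph{irrational} formula of Proposition~\ref{coro: prob of CC} at a rational $a=r/q$ in lowest terms. The key observation is the periodicity $k_{i+r}=k_i+q$, which holds because $k_{i+r}=\floor{(i+1+r)q/r}+1=\floor{(i+1)q/r}+q+1=k_i+q$. Consequently the $i$-th summand of $\prob(\Ycas\mid G)$ satisfies $(1-p_1)^{i+r}p_1^{k_{i+r}}=\delta_G\,(1-p_1)^i p_1^{k_i}$ with $\delta_G:=p_1^{q}(1-p_1)^{r}$, so the infinite sum collapses into a geometric series of blocks of length $r$:
\begin{align*}
\prob_{\text{irra}}(\Ycas\mid G)=\frac{\summ{i=0}{r-1}(1-p_1)^i p_1^{k_i}}{1-\delta_G}.
\end{align*}
The only difference from the rational formula \eqref{eq: Y-cascade probability rational} is the factor $2$ in the denominator; the numerators coincide. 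The same computation with $\delta_B:=p_2^{r}(1-p_2)^{q}$ handles the $V=B$ case.

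Writing $\mathrm{Num}$ for the common numerator and $\delta$ for the relevant $\delta_G$ or $\delta_B$, the exact discrepancy is
\begin{align*}
\prob_{\text{ra}}-\prob_{\text{irra}}=\mathrm{Num}\left(\frac{1}{1-2\delta}-\frac{1}{1-\delta}\right)=\frac{\mathrm{Num}\,\delta}{(1-\delta)(1-2\delta)}.
\end{align*}
Since $\mathrm{Num}=(1-\delta)\prob_{\text{irra}}\le 1-\delta$ and, using $a>1\Rightarrow r\ge 2$ together with $r\ge q$, one checks $\delta\le\tfrac14$, this is bounded by $\delta/(1-2\delta)\le 2\delta$. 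The dyadic estimates $\delta_G=p_1^{q}(1-p_1)^{r}<2^{-r}$ (because $1-p_1<\tfrac12$) and $\delta_B=p_2^{r}(1-p_2)^{q}<2^{-q}$ (because $1-p_2<\tfrac12$) then give $\abs*{\prob_{\text{ra}}(\Ycas\mid G)-\prob_{\text{irra}}(\Ycas\mid G)}<2^{1-r}$ and, symmetrically, $<2^{1-q}$ for the $V=B$ probability.

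It remains to count the exceptional $a$. The $V=G$ bound can exceed $\epsilon$ only if $2^{1-r}\ge\epsilon$, i.e. $r\le 1+\log_2(1/\epsilon)$, implicating at most about $\log_2(1/\epsilon)$ values of the numerator $r$; symmetrically the $V=B$ bound can exceed $\epsilon$ only if $q\le 1+\log_2(1/\epsilon)$. Adding the two contributions and using $\log_2(1/\epsilon)+\log_2(1/\epsilon)=\log_2\!\big((1/\epsilon)^2\big)$ yields the claimed $N\le\log_2\!\big((1/\epsilon)^2\big)$. I expect the counting to be the main obstacle: one must argue carefully that each case excludes at most $\log_2(1/\epsilon)$ genuinely distinct rationals $a=r/q$ --- the $V=G$ estimate controls only $r$ and the $V=B$ estimate only $q$ --- and that the two lists combine additively rather than multiplicatively. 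Making this bookkeeping precise (and confirming that the bound $\delta\le\tfrac14$ used for the $2\delta$ step holds throughout the relevant range) is where the real work lies; the algebraic reduction in the first two paragraphs is routine.
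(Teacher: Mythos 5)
Your algebraic core is sound and is actually tighter than what the paper does. The paper never resums the irrational series at a rational $a$; it sandwiches both $\prob_{\text{ra}}$ and $\prob_{\text{irra}}$ between $\mathrm{Num}=\sum_{i=0}^{r-1}(1-p_1)^ip_1^{k_i}$ and $\mathrm{Num}$ plus an error term, and bounds the gap by $\max\{2^{1-k},(2^k-1)^{-1}\}$ for an integer $k<r$. Your periodicity $k_{i+r}=k_i+q$, the exact identity $\prob_{\text{irra}}=\mathrm{Num}/(1-\delta)$, and the closed-form discrepancy $\mathrm{Num}\,\delta/((1-\delta)(1-2\delta))$ reach the same $O(2^{-r})$ conclusion for the $V=G$ probability more transparently, and your side claims do check out ($\prob_{\text{irra}}\le 1$ because $k_i\ge1$ gives $\sum_i(1-p_1)^ip_1^{k_i}\le\sum_i(1-p_1)^ip_1=1$; $\delta\le\tfrac14$ because $r\ge 2$ and $r>q$).

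The gap you flag in the counting is genuine, and it does not close in the additive form you hope for --- nor does the paper close it. Two distinct problems. First, the criterion $r\le 1+\log_2(1/\epsilon)$ excludes one rational $a=r/q$ for \emph{every} coprime $q<r$, hence on the order of $(\log_2(1/\epsilon))^2$ values of $a$, not $\log_2(1/\epsilon)$ of them; this quadratic count is precisely what the paper's own proof concludes (``at most $(\log_2(1/\epsilon)+1)^2$ rational numbers'') and what the ``fewer than $400$ for $\epsilon=10^{-6}$'' remark in the text reflects, and it is inconsistent with the $N\le\log_2((1/\epsilon)^2)=2\log_2(1/\epsilon)$ appearing in the statement. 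The bound you are trying to reach is the one in the (apparently mistyped) statement, not one this method delivers. Second, your $V=B$ estimate $2^{1-q}$ controls only $q$, and the set of rationals $r/q>1$ with $q\le 1+\log_2(1/\epsilon)$ is \emph{infinite}, so as written your exceptional set for the $B$-probability is not even finite. The repair is the identity $p_2^r(1-p_2)^q=p_1^q(1-p_1)^r$, which holds on the curve $a=r/q$ directly from the definition of $a$ (raise $\frac{p_1}{1-p_2}=\bigl(\frac{p_2}{1-p_1}\bigr)^{a}$ to the $q$-th power); hence $\delta_B=\delta_G$, and since $x\mapsto x^q(1-x)^r$ is decreasing for $x>q/(q+r)$ and $p_1>\tfrac12>q/(q+r)$, in fact $\delta_B=\delta_G<2^{-(q+r)}$. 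Both discrepancies are then controlled by $r$ alone, both exceptional sets collapse to $\{r/q:\,q<r\le 1+\log_2(1/\epsilon)\}$, and you recover the paper's quadratic bound --- but not the linear one in the statement.
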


Suppose that $\epsilon = 10^{-6}$. Then, there are fewer than 400 rational numbers for which the $Y$-cascade probability differs by more than $\epsilon$ between the rational and irrational cascade constant cases, and the same holds for the $N$-cascade probability. Given the complexity of the cascade constant formula, 
\[
a = \frac{\log\left(\frac{p_1}{1 - p_2}\right)}{\log\left(\frac{p_2}{1 - p_1}\right)},
\]
it is highly unlikely that the values of $p_1$ and $p_2$ would result in a cascade constant $a$ that coincides with one of these 400 rational numbers.

This observation is further supported by our numerical experiments, one of which is illustrated in Figure~\ref{fig:comparison}. Except for the special case where $p_1 = p_2$, there is no significant difference between the formulas for the irrational and rational cascade constants when $p_1 = 0.7$ and $p_2$ iterates over the interval from $0.5$ to $0.1$ with a step-size of $0.001$. At the point $p_1 = p_2 =0.7$, the formula of rational cascade constant case gives the probability of the correct cascade as $0.84$, and the irrational cascade formula gives $0.75$

Based on Proposition~\ref{prop: irrational case approaches rational case}, we ignore the influence of the rationality of the cascade constant in the following analysis except when $p_1=p_2$ and otherwise consider only the formulas for cascade probability with an irrational cascade constant. This is important as otherwise, we would have to account for the fact that the probability of the correct cascade has discontinuities at every rational number, which would complicate optimizing this quantity. 

\begin{figure}[h!]
    \centering
    \begin{subfigure}[b]{1\linewidth}
        \centering
        \includegraphics[width=\linewidth,  trim=50 195 50 190, clip]{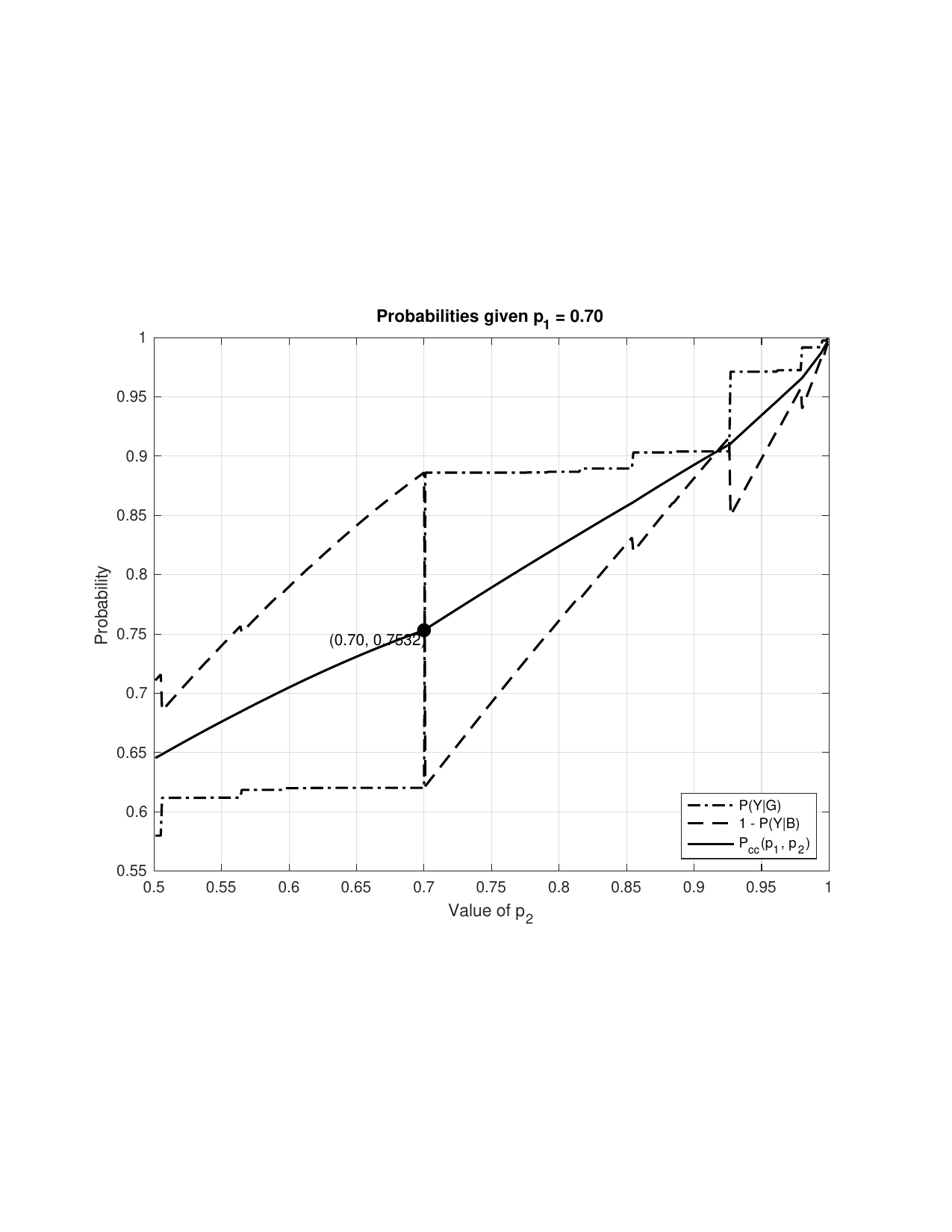}
        \setlength{\abovecaptionskip}{-5pt}
        \setlength{\belowcaptionskip}{0pt}
        \caption{Plot of $\prob_{\text{cc}}(p_1,p_2)$ computed using the formula for an irrational cascade constant, where $p_1 = 0.7$ and $p_2$ varies from $0.501$ to $0.999$ with step-size $0.001$.}
        \label{fig: a_values_increasing_p_2}
    \end{subfigure}
    \begin{subfigure}[b]{1\linewidth}
        \centering
        \includegraphics[width=\linewidth,  trim=50 195 50 190, clip]{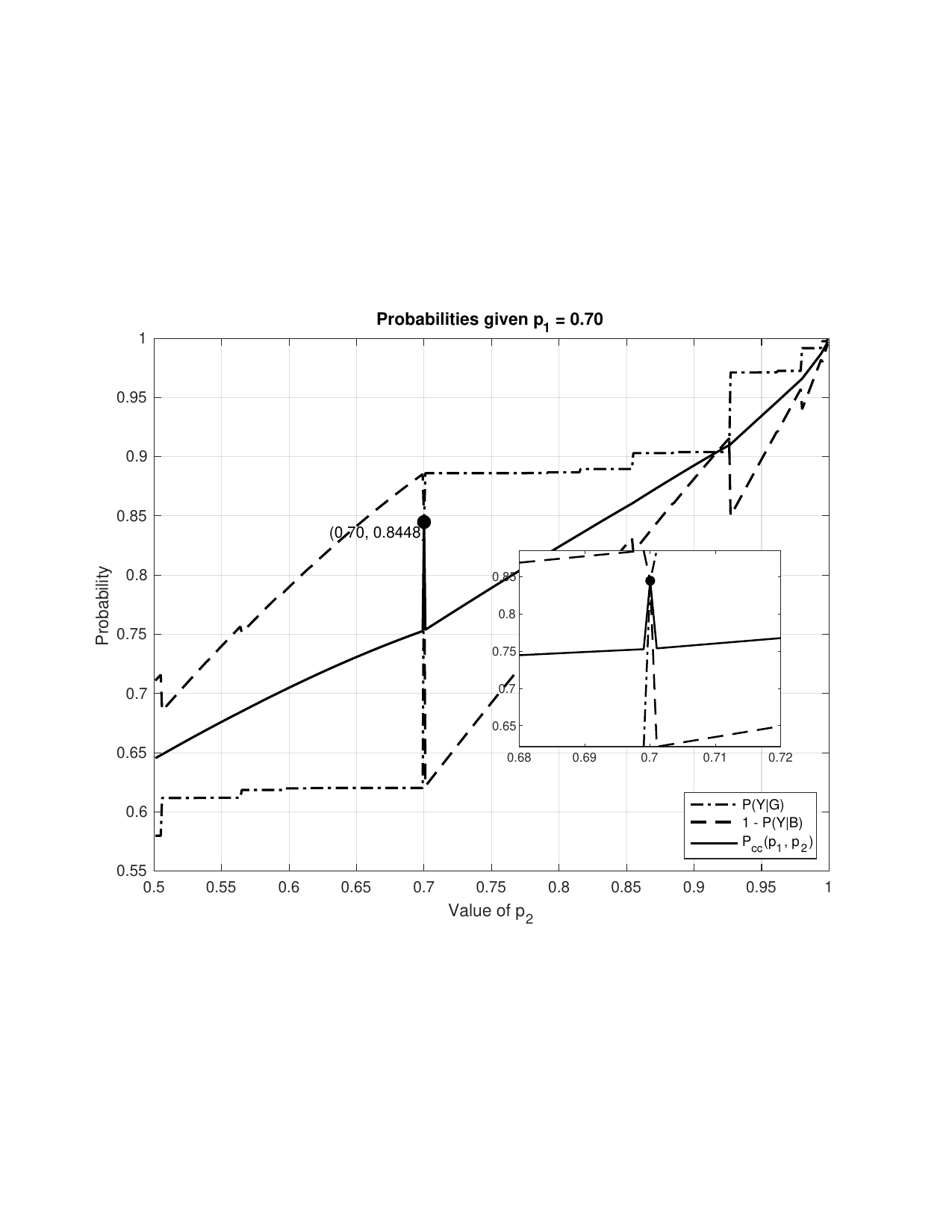}
        \setlength{\abovecaptionskip}{-5pt}
        \setlength{\belowcaptionskip}{0pt}
        \caption{Plot of $\prob_{\text{cc}}(p_1,p_2)$ computed using the formula for a rational cascade constant, where $p_1 = 0.7$ and $p_2$ varies from $0.501$ to $0.999$ with step-size $0.001$. An important difference compared to the irrational case is the behavior at $p_1=p_2=0.7$, where a noticeable discontinuity occurs.}
        \label{fig: a_values_increasing_p_1}
    \end{subfigure}
    \setlength{\abovecaptionskip}{-5pt}
    \setlength{\belowcaptionskip}{-10pt}
    \caption{Comparison of $\prob_{\text{cc}}(p_1,p_2)$ using the irrational and rational formulas with increasing $p_2$.}
    \label{fig:comparison}
\end{figure}

\section{Single-side Signal Quality Improvement}

Let us first consider a simpler problem than \eqref{eq:optimization}, where only a single signal quality can be improved. Specifically, either the \textit{Good} signal quality can be improved to $p_1' \in [p_1, p_1 + b]$, or the \textit{Bad} signal quality can be improved to $p_2' \in [p_2, p_2 + b]$, where $b$ represents the available budget. In this case, we demonstrate that when the budget is limited, the optimal strategy to maximize the probability of a correct cascade is either allocating the entire budget to the available option or to maintain symmetric signal qualities. While it may not be surprising that allocating the entire budget to a single option is optimal, maintaining symmetric signal qualities proves advantageous when the budget is small enough. This conclusion can be further generalized to the case where the budget is allocated across both signal qualities. In our analysis, we begin with the case where $p_2$ is the only quality that can be improved.


In the single-side signal quality improvement scenario, the cascade constant $a$ changes with the signal qualities $p_1$ and $p_2$. In particular, $a$ increases as $p_2$ improves. This property allows us to demonstrate that $\prob_{\text{cc}}(p_1, p_2)$ (under the irrational $a$ model) is a continuous non-decreasing function as shown in Figure~\ref{fig:comparison}.  Note that both $\prob(Y_{\mathrm{cas}} \mid G)$ and $\prob(N_{\mathrm{cas}} \mid B)$ exhibit a finite number of discontinuities as $p_2$ varies; however, it can be shown that when added to determine $\prob_{\text{cc}}(p_1, p_2)$, these ``cancel out'' leaving a continuous function.

\begin{prop} \label{prop: continuous, increasing P(cc), p_2}
    When the signal qualities are set as $p_1$ and $p_2+c$, $\prob_{\text{cc}}(p_1,p_2)$ under the irrational $a$ model is a continuous, non-decreasing function with respect to $c$ when $\frac{1}{2}< p_1< p_2+c < 1$.
\end{prop}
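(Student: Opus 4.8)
The plan is to work entirely with the irrational-$a$ formulas from Proposition~\ref{coro: prob of CC}, writing $\prob(Y_\mathrm{cas}\mid G)$ and $\prob(Y_\mathrm{cas}\mid B)$ as functions of $c$ through the cascade constant $a = a(c) = \log\!\big(\tfrac{p_1}{1-p_2-c}\big)/\log\!\big(\tfrac{p_2+c}{1-p_1}\big)$, and then to show the two one-sided probabilities have compensating jumps that cancel in $\prob_\mathrm{cc}$. First I would establish the auxiliary monotonicity facts: that $a(c)$ is continuous and strictly increasing in $c$ on the admissible range (both numerator and denominator of $a$ are positive since $p_1,p_2+c\in(1/2,1)$, the numerator's argument increases and the denominator's decreases, so their log-ratio increases), and that $p_2+c$ itself increases. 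These are the ``levers'' that drive everything.

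Next I would analyze where discontinuities can occur. In the series $\prob(Y_\mathrm{cas}\mid G)=\sum_{i\ge 0}(1-p_1)^i p_1^{k_i}$ with $k_i=\floor{(i+1)/a}+1$, the only way the value changes discontinuously is when some $k_i$ jumps, i.e. when $(i+1)/a$ crosses an integer, equivalently when $a$ crosses a rational of the form $(i+1)/m$. As $c$ increases, $a$ increases, so $(i+1)/a$ decreases, so each $k_i$ is non-increasing in $c$; hence each term $(1-p_1)^i p_1^{k_i}$ is non-decreasing (since $p_1<1$), so $\prob(Y_\mathrm{cas}\mid G)$ is non-decreasing in $c$, with upward jumps exactly at the rational crossing values of $a$. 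The symmetric analysis: $\prob(Y_\mathrm{cas}\mid B)=\sum_{i\ge 0}p_2^i(1-p_2)^{k_i}$ — here I must be careful because $p_2$ is replaced by $p_2+c$, so there are two effects, the explicit increase of the base $p_2+c$ and the decrease of the exponents $k_i$; both push $\prob(Y_\mathrm{cas}\mid B)$ upward (since $1-p_2-c<1$), so $\prob(Y_\mathrm{cas}\mid B)$ is non-decreasing and $\prob(N_\mathrm{cas}\mid B)=1-\prob(Y_\mathrm{cas}\mid B)$ is non-increasing, again with jumps precisely at rational values of $a$.

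The crux is then the cancellation of jumps in $\prob_\mathrm{cc}=\tfrac12\prob(Y_\mathrm{cas}\mid G)+\tfrac12\prob(N_\mathrm{cas}\mid B)$. At a value $c_0$ where $a(c_0)=r/q$ in lowest terms, the indices $i$ that move are $i$ with $q\mid (i+1)$, i.e. $i+1 = q, 2q, 3q,\dots$, and for such $i$ the value $k_i$ drops by exactly $1$ as $c$ passes $c_0$ (one needs that only multiples of $q$ are affected and each drops by one, using $r/q$ in lowest terms). I would compute the jump in $\prob(Y_\mathrm{cas}\mid G)$ as $\sum_{j\ge 1}(1-p_1)^{jq-1}\big(p_1^{k}-p_1^{k+1}\big)$ for the appropriate $k$ at each level, sum the geometric series, and similarly compute the jump in $\prob(Y_\mathrm{cas}\mid B)$; the claim is that these two jumps are equal, so $\tfrac12(\text{jump up in }\prob(Y_\mathrm{cas}\mid G)) - \tfrac12(\text{jump up in }\prob(Y_\mathrm{cas}\mid B))=0$. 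I expect this algebraic identity — showing the two geometric sums coincide, which should ultimately reflect the fact that at $a=r/q$ the ``extra'' sample paths created at the point $a-1$ contribute the same mass whether $V=G$ or $V=B$ when weighted correctly, consistent with the numerator/denominator structure in Proposition~\ref{prop: irrational prob} — to be the main obstacle; it is essentially a bookkeeping computation but one that must be done exactly. Once jump-cancellation is in hand, continuity of $\prob_\mathrm{cc}$ follows since away from the (discrete) rational crossing values each series is locally a finite-indexed smooth expression in $c$, and combining with the termwise monotonicity above gives that $\prob_\mathrm{cc}$ is continuous and non-decreasing in $c$, as claimed. I would also note the endpoints: the constraints $1/2<p_1$ and $p_2+c<1$ keep all bases and the constant $a$ in the regime where these formulas and the convergence of the series are valid.
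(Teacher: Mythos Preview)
Your plan handles continuity plausibly but has a genuine gap in the monotonicity half, and the two halves are more intertwined than you recognize.

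\textbf{The monotonicity error.} You assert that $\prob(Y_\mathrm{cas}\mid B)=\sum_i (p_2+c)^i(1-p_2-c)^{k_i}$ is non-decreasing in $c$ because ``the explicit increase of the base $p_2+c$ and the decrease of the exponents $k_i$'' both push it up. You have overlooked a third effect: the base $1-p_2-c$ of the second factor is \emph{decreasing}, and this pushes each term down. In fact, on any interval where the $k_i$ are constant, each term $(p_2+c)^i(1-p_2-c)^{k_i}$ is \emph{decreasing} in $c$ (this is what the paper shows, via a coupling argument), while at the rational crossing values of $a$ each affected term jumps \emph{up} (since $k_i$ drops by one). So $\prob(Y_\mathrm{cas}\mid B)$ is not monotone at all, and your conclusion that $\prob(N_\mathrm{cas}\mid B)$ is non-increasing is false. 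Even granting your stated monotonicities, the final sentence is a non-sequitur: $\prob_\mathrm{cc}=\tfrac12\prob(Y_\mathrm{cas}\mid G)+\tfrac12\prob(N_\mathrm{cas}\mid B)$ would be half a non-decreasing function plus half a non-increasing one, which says nothing about the sum. (A smaller slip: your one-line argument that $a(c)$ increases is wrong too --- both numerator and denominator of $a$ increase with $c$; the paper proves monotonicity of $a$ by direct differentiation.)

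\textbf{How the paper fixes this.} Rather than analyze the two cascade probabilities separately, the paper first combines the two series into a single sum of \emph{differences},
\[
\prob_\mathrm{cc}(p_1,p_2+c)=\tfrac12+\tfrac12\sum_{i\ge 0} h_i(c),\qquad h_i(c)=(1-p_1)^i p_1^{k_i}-(1-p_2-c)^{k_i}(p_2+c)^i,
\]
and proves that each $h_i$ individually is continuous and increasing. Continuity of $h_i$ is exactly your jump-cancellation identity, but done term-by-term: at a value $c_n$ where $k_i$ drops by one, the upward jump in $(1-p_1)^ip_1^{k_i}$ equals the upward jump in $(1-p_2-c)^{k_i}(p_2+c)^i$, and this equality reduces (after factoring) to the defining relation $a(c_n)=\log\!\big(\tfrac{p_1}{1-p_2-c_n}\big)\big/\log\!\big(\tfrac{p_2+c_n}{1-p_1}\big)$. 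Monotonicity of $h_i$ is then immediate on each interval where $k_i$ is constant: the first summand does not depend on $c$, and the second is decreasing. Uniform convergence transfers both properties to the sum. The point is that working with the paired differences $h_i$ kills two birds with one stone; your series-level approach can be patched for continuity (your expected geometric-sum identity is just $\sum_i$ of the paper's termwise identity over the $i$ with $q\mid(i+1)$), but it gives you no handle on monotonicity, which genuinely requires comparing the two pieces against each other.
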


Next, we consider the behavior of $\prob_{\text{cc}}(p_1, p_2)$ when varying $p_1$.  

\begin{prop} \label{prop: continuous, increasing P(cc), p_1}
    When the signal qualities are set as $p_1 + c$ and $p_2$, $\prob_{\text{cc}}(p_1, p_2)$ under the irrational $a$ model is a continuous, non-decreasing function with respect to $c$ for $\frac{1}{2} < p_1 + c < p_2 < 1$. 
\end{prop}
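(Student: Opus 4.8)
The plan is to work throughout with the irrational-$a$ formulas of Proposition \ref{coro: prob of CC} and with the identity obtained from \eqref{eq:cc} together with $\prob(\Ncas\mid B) = 1-\prob(\Ycas\mid B)$,
\[
\prob_{\text{cc}}(p_1+c,p_2) = \tfrac12 + \tfrac12\,D(c),\qquad D(c) := \prob(\Ycas\mid G) - \prob(\Ycas\mid B),
\]
where both cascade probabilities are evaluated at $p_1' := p_1+c$ with $p_2$ held fixed. Since the additive $\tfrac12$ is constant, it suffices to show that $D$ is continuous and non-decreasing on $\{c:\tfrac12<p_1+c<p_2\}$. A preliminary step is to record the behaviour of the cascade constant $a(c)$ at $(p_1+c,p_2)$: using that $x\mapsto x(1-x)$ is strictly decreasing on $(\tfrac12,1)$, one checks that $a(c)>1$ throughout and that $a(c)$ is continuous and strictly decreasing in $c$, with $a(c)\to 1^+$ as $p_1+c\to p_2^-$. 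Consequently each $k_i=\floor{(i+1)/a}+1$ is, as a function of $c$, a step function that can only jump upward by one, and does so exactly at those $c$ for which $a(c)$ is rational.

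For continuity, fix a candidate discontinuity $c^\ast$ with $a(c^\ast)=r/q$ in lowest terms and set $p_1^\ast:=p_1+c^\ast$. As $c$ increases through $c^\ast$, $k_i$ jumps upward by one precisely for $i$ with $r\mid(i+1)$, i.e.\ $i+1\in\{r,2r,3r,\dots\}$, and for $i=jr-1$ the old exponent is $jq$. Summing the induced term-by-term changes gives the one-sided jumps
\[
\Delta_G = -\sum_{j\ge1}\big[(1-p_1^\ast)^r(p_1^\ast)^q\big]^{j} = -\frac{(1-p_1^\ast)^r(p_1^\ast)^q}{1-(1-p_1^\ast)^r(p_1^\ast)^q},
\]
\[
\Delta_B = -\sum_{j\ge1}\big[p_2^r(1-p_2)^q\big]^{j} = -\frac{p_2^r(1-p_2)^q}{1-p_2^r(1-p_2)^q},
\]
in $\prob(\Ycas\mid G)$ and $\prob(\Ycas\mid B)$ respectively. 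Exponentiating the definition of $a$ at $a(c^\ast)=r/q$ yields the key identity $(p_1^\ast)^q(1-p_1^\ast)^r = p_2^r(1-p_2)^q$; since $t\mapsto t/(1-t)$ is injective, this forces $\Delta_G=\Delta_B$, so the jumps cancel in $D$ and $D$ is continuous at every such $c^\ast$, hence everywhere. This part is essentially mechanical once the jumping index set is identified.

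The main obstacle is monotonicity, and I expect it cannot be imported from Proposition \ref{prop: continuous, increasing P(cc), p_2}: under the swap symmetry $\prob_{\text{cc}}(x,y)=\prob_{\text{cc}}(y,x)$ that proposition shows $\prob_{\text{cc}}$ increases when the \emph{larger} quality is raised, whereas here we raise the \emph{smaller} one, which drives $a$ toward $1$ rather than away from it. I therefore plan to argue directly on $D$. Note that $\prob(\Ycas\mid B)$ depends on $c$ only through the $k_i$, so away from the rational-$a$ values it contributes nothing to the local variation, and the local change of $D$ equals that of $\prob(\Ycas\mid G)=\sum_i(1-p_1')^i(p_1')^{k_i}$. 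With the $k_i$ frozen, term-by-term differentiation (justified by the geometric domination $(1-p_1')^i$) gives
\[
\frac{d}{dc}\,\prob(\Ycas\mid G) = \sum_{i\ge0}(1-p_1')^{i-1}(p_1')^{k_i-1}\big[k_i(1-p_1')-i\,p_1'\big],
\]
whose summands are positive iff $p_1'<k_i/(k_i+i)$, hence positive for small $i$ and negative for large $i$.

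The difficulty is thus to show that this mixed-sign series is non-negative. I would attack it by grouping consecutive indices into the maximal runs on which $k_i$ is constant (the Beatty-type blocks determined by $a$) and showing that the positive early contributions dominate via an Abel-summation bound, using that within each block the terms form a finite geometric progression in $(1-p_1')$. A cleaner alternative, which I would pursue in parallel, is to establish monotonicity first for rational $a$, where Proposition \ref{prop: irrational prob} presents $\prob(\Ycas\mid G)$ as a finite rational expression amenable to a direct sign computation, and then transfer the conclusion to the irrational formula via the continuity proved above together with the approximation of Proposition \ref{prop: irrational case approaches rational case}. Either way, combining continuity of $D$ with a non-negative (one-sided) derivative upgrades to global monotonicity; the delicate point requiring care is that $\prob(\Ycas\mid G)$ and $\prob(\Ycas\mid B)$ individually have a dense set of downward jumps, so the argument must invoke their exact cancellation in $D$ rather than treating $D$ as piecewise smooth on open intervals.
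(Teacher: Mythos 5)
Your continuity argument is correct and essentially complete. Identifying the jumping indices $i=jr-1$ at a point $c^\ast$ with $a(c^\ast)=r/q$, summing the induced geometric series of jumps, and cancelling $\Delta_G=\Delta_B$ via the identity $(p_1^\ast)^q(1-p_1^\ast)^r=p_2^r(1-p_2)^q$ all check out. The paper organizes this differently: it shows (as in the computation \eqref{eq:diff_in_discontinuity}, transplanted to the $p_1$ case) that for each fixed $i$ the jump of $(1-p_1')^i(p_1')^{k_i}$ already equals the jump of $(1-p_2)^{k_i}p_2^i$, so every summand $h_i(c)$ is individually continuous; your version only gets cancellation after summing over $i$, which costs you an extra (routine, geometrically dominated) uniform-convergence step to interchange the sum with one-sided limits, but is otherwise equivalent.

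The genuine gap is monotonicity, which you correctly identify as the hard half and then leave as two unexecuted plans. The paper's route is to work on each interval of $c$ where $k_i$ is constant: there $(1-p_2)^{k_i}p_2^i$ is frozen, so it suffices that each $(1-p_1-c)^i(p_1+c)^{k_i}$ be non-decreasing, and the paper asserts this by analogy with Lemma~\ref{lemma: countinuous, increasing functions}. Your own sign computation shows precisely why this assertion is delicate: the per-term derivative carries the factor $k_i(1-p_1')-ip_1'$, which is negative once $i/k_i>(1-p_1')/p_1'$, and since $i/k_i\to a\ge 1>(1-p_1')/p_1'$ this happens for all large $i$. So the positive early terms really must be shown to dominate the negative tail, and nothing in your proposal does this. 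The Abel-summation/Beatty-block idea is a sketch with no estimate; it is exactly the missing content of the proof. The fallback via rational $a$ is structurally worse: Proposition~\ref{prop: irrational case approaches rational case} gives $\epsilon$-closeness of the two formulas for all but finitely many $a$, but an $\epsilon$-perturbation of a monotone function need not be monotone, so closeness cannot transfer monotonicity; moreover ``monotonicity in $c$ at rational $a$'' is not well-posed, since $a(c)$ varies with $c$ and is rational only on a countable set. As written, the central claim of the proposition --- non-negativity of the mixed-sign series $\sum_{i\ge 0}(1-p_1')^{i-1}(p_1')^{k_i-1}\bigl[k_i(1-p_1')-i\,p_1'\bigr]$, or any equivalent statement --- is not established. (To be fair, your analysis also exposes that the paper's one-line ``follow the similar proof'' glosses over the same issue, since the term-by-term monotonicity that works when improving $p_2$ does not literally hold when improving $p_1$; but flagging the obstacle is not the same as overcoming it.)
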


At first glance, Proposition~\ref{prop: continuous, increasing P(cc), p_1} and Proposition~\ref{prop: continuous, increasing P(cc), p_2} suggest that if we have a budget to improve one signal quality, the optimal strategy is to allocate the entire budget to that improvement. However, a special case that has not been considered is the symmetric case, where $p_1 = p_2$. As shown in 
~\ref{fig:comparison}, at this rational value of $a$, there can be a large discontinuity in $\prob_{\text{cc}}(p_1, p_2)$ if it is calculated using the formulas in Proposition~\ref{prop: irrational prob}. If $p_1 = p_2$, then improving either $p_1$ or $p_2$ by a small enough amount would lead to a lower value of $\prob_{\text{cc}}(p_1, p_2).$  For example, in  Figure~\ref{fig:comparison}, when $p_1 = p_2 = 0.7$, the probability of a correct cascade is $0.844882$, and when $p_1 = 0.7$ and $0.7 < p_2 < 0.8$, the probability of a correct cascade is lower.

\section{Double-side Signal Quality Improvement}
Next, we consider the general problem where, given a budget, it is possible to improve both signal qualities simultaneously. How should the budget be allocated between the two signal qualities in this case?

Expressing the cascade constant and the probability of a correct cascade as functions of both $c_1$ and $c_2$ leads to a complex problem that is difficult to analyze. Therefore, we transform the problem into a simpler form. Suppose the budget is fully utilized, i.e., $p_1' + p_2' = p_1 + p_2 + b$, and that the budget is insufficient to improve $p_1$ after improving $p_2$ to 1, i.e., $b \leq 1 - p_2$. Under these conditions, we introduce a single variable $c$ with constraints $p_1' = \frac{p_1 + p_2 + b}{2} - c$, $p_2' = \frac{p_1 + p_2 + b}{2} + c$, where $0 < c \leq \frac{b + p_2 - p_1}{2}$, to represent the feasible region for $p_1'$ and $p_2'$. 

We demonstrate that, when the budget is fully utilized, the probability of a correct cascade is an increasing function of $c$.


\begin{prop} \label{prop: continuous, increasing P(cc), double-side}
    When the signal qualities are set as $p_1' = p-c$ and $p_2' = p+c$, $\prob_{\text{cc}}(p_1,p_2)$ is a continuous, non-decreasing function of $c$ when $\frac{1}{2}< p-c < p+c < 1$. 
\end{prop}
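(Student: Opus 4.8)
The plan is to reduce the statement to a monotonicity property of a single scalar function and then establish it via the irrational-case formula of Proposition~\ref{coro: prob of CC}. Writing $p_1' = p-c$, $p_2' = p+c$ and abbreviating $g_G(c) := \prob(\Ycas \mid G)$, $g_B(c) := \prob(\Ycas \mid B)$, equation~\eqref{eq:cc} together with $\prob(\Ncas\mid B) = 1-\prob(\Ycas\mid B)$ gives $\mathbb{P}_{\text{cc}} = \tfrac12 + \tfrac12\big(g_G(c)-g_B(c)\big)$, so it suffices to show that $g_G(c)-g_B(c)$ is continuous and non-decreasing on the stated range. I may use the irrational formula throughout, since $c>0$ forces $p_1'<p_2'$, i.e. $a>1$. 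Two structural identities drive everything: the cascade constant gives $p_1'/(1-p_2') = \big(p_2'/(1-p_1')\big)^{a}$, and the floor definition $k_i = \floor{(i+1)/a}+1$ gives $a(k_i-1)\le i+1 < a k_i$.

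For continuity I would argue exactly as in Proposition~\ref{prop: continuous, increasing P(cc), p_2} and Proposition~\ref{prop: continuous, increasing P(cc), p_1}: as $c$ varies, $a(c)$ varies continuously, and the only discontinuities of $g_G,g_B$ occur when $a$ crosses a value at which some $k_i$ drops by one. At such a threshold, $a(k-1)=i_0+1$, and the resulting jump in the $i_0$-th term of $g_G$ equals $(1-p_1')^{i_0+1}(p_1')^{k-1}$ while the jump in $g_B$ equals $(p_2')^{i_0+1}(1-p_2')^{k-1}$; the two identities above force these jumps to be equal, so they cancel in $g_G-g_B$. Hence $\mathbb{P}_{\text{cc}}$ is continuous, and it remains only to prove monotonicity on the smooth pieces and glue them using continuity.

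For monotonicity I would differentiate in $c$. Because $p_1'$ enters only $g_G$ and $p_2'$ only $g_B$ (away from thresholds $k_i$ is locally constant, and the threshold contributions already cancel), $\tfrac{d}{dc}(g_G-g_B) = -\tfrac{\partial g_B}{\partial p_2}-\tfrac{\partial g_G}{\partial p_1}$, and the claim becomes
\begin{equation*}
-\,\frac{\partial}{\partial p_2}\prob(\Ycas\mid B) \;\ge\; \frac{\partial}{\partial p_1}\prob(\Ycas\mid G),
\end{equation*}
the statement that sharpening the bad signal helps at least as much as dulling the good signal hurts. Differentiating the series of Proposition~\ref{coro: prob of CC} term by term and factoring out the positive quantity $(1-p_1')^{i-1}(1-p_2')^{k_i-1}t^{i-1}$, where $t := p_2'/(1-p_1')>1$, the $i$-th term of the difference reduces (using $p_1'/(1-p_2') = t^{a}$) to
\begin{equation*}
k_i(1-p_1')\big(t - t^{\eta_i}\big) + i\,(1-p_2')\big(t^{\eta_i+a}-1\big),\qquad \eta_i := a(k_i-1)-i+1 ,
\end{equation*}
and the floor bounds give $\eta_i\le 2$ while $\eta_i+a>2$. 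Thus the whole problem reduces to showing this quantity is non-negative for every $i$.

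The main obstacle is precisely this per-term inequality. When $\eta_i\le 1$ both summands are non-negative and there is nothing to prove; the delicate case is $1<\eta_i\le 2$, where the first summand is negative and one must show the second dominates. Here I would exploit that the two exponents differ by exactly $a$, so that $t^{\eta_i+a}=\tfrac{p_1'}{1-p_2'}\,t^{\eta_i}$ removes the fractional power via the cascade-constant identity, recasting the required bound as $t^{\eta_i}\big(i\,p_1' - k_i(1-p_1')\big)\ge i(1-p_2') - k_i p_2'$, and then close it using $p_1' > \tfrac12 > 1-p_2'$ together with the two-sided floor bound on $i/k_i$ implied by $a(k_i-1)\le i+1<ak_i$. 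I expect this estimate, and the verification that the smooth-piece derivative sign combines with the jump cancellation to give global monotonicity, to be the technical heart of the argument; the symmetric case $p_1=p_2$ (rational $a=1$) is excluded by the strict constraint $c>0$ and is handled separately as in the preceding section.
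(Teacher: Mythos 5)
Your reduction is sound and, up to a change of variables, it is exactly the paper's argument: write $\prob_{\text{cc}}=\tfrac12+\tfrac12(g_G-g_B)$, prove continuity by cancellation of the jumps at the thresholds $a(k-1)=i_0+1$ (your jump computation is correct and matches the paper's continuity lemma), and reduce monotonicity to a per-term inequality. Indeed, your quantity $k_i(1-p_1')\big(t-t^{\eta_i}\big)+i(1-p_2')\big(t^{\eta_i+a}-1\big)$ is precisely the paper's $f'(c)-g'(c)$ divided by the positive prefactor $(1-p_1')^{i-1}(1-p_2')^{k_i-1}t^{i-1}$, so you have isolated the same inequality the paper needs, in a cleaner parametrization than its quotient manipulation. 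The case $\eta_i\le 1$ is fine, and within the delicate case $1<\eta_i\le 2$ the sub-case $ip_1'-k_i(1-p_1')\ge 0$ also closes immediately: dropping $t^{\eta_i}\ge 1$ leaves at least $(i+k_i)(p_1'+p_2'-1)>0$.

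The genuine gap is the remaining sub-case $ip_1'-k_i(1-p_1')<0$, which does occur (it forces $k_i=i+1$ and $\eta_i=i(a-1)+1\in(1,2]$; e.g.\ $i=1$, $p_1'=0.6$, $p_2'=0.7$). There your recast inequality asks that a negative quantity, after multiplication by $t^{\eta_i}>1$, still dominate another negative quantity; the tools you name ($p_1'>\tfrac12>1-p_2'$ and the floor bounds) only give $t^{\eta_i}\le t^2$, which does not close it. One has to exploit $t^{\eta_i}=t\cdot\big(\tfrac{p_1'(1-p_1')}{p_2'(1-p_2')}\big)^{i}$ together with $t(1-p_1')=p_2'$ and $i(a-1)\le 1$, and the resulting estimate is genuinely tight (the margin shrinks as $p_1'\to\tfrac12$ with $a\to 2^{-}$), so ``I expect this to be the technical heart'' is an accurate diagnosis but not a proof. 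For what it is worth, the paper's own proof stumbles at exactly this point: its intermediate bound $f'(c)>\big(i\tfrac{(p+c)(p-c)}{1-p+c}-k_i(p+c)\big)z(c)$ is equivalent to $[ip_1'-k_i(1-p_1')](\eta_i-1)\ge 0$ and therefore reverses in this sub-case (at $i=1$, $p_1'=0.6$, $p_2'=0.7$ one gets $f'=-0.12<-0.105$), even though the conclusion $f'-g'>0$ still holds there; a complete write-up must supply the missing estimate. A minor further point: rather than differentiating the infinite sum (which requires uniform convergence of the derivative series), it is cleaner to prove each term continuous and non-decreasing and pass to the limit, as the paper does.
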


Having demonstrated the case where the budget $b$ must be exhausted, let us now examine the general case with constraint $b\leq1-p_2$. Without loss of generality, we assume that a total budget of $b'<b$ is utilized to improve the signal qualities in total, and $p_1' = \frac{p_1 + p_2 + b'}{2} - c$, $p_2' = \frac{p_1 + p_2 + b'}{2} + c$, where $0< c\leq \frac{b' + p_2 - p_1}{2}$. Then, Proposition~\ref{prop: continuous, increasing P(cc), double-side} suggests that it suffices to consider allocating the entire budget to improve $p_2$, i.e., let $p_2'=p_2+b'$. Notice that allocating the entire budget to improve $p_1$ is no better than allocating it to $p_2$: If $p_1+b'>p_2$, we set $c=\frac{p_1+b'-p_2}{2}$ to ensure $p_1'=p_1+b'$, else we set $c=-\frac{p_1+b'-p_2}{2}$, in both cases $c$ does not achieve the upper bound of it, $\frac{b' + p_2 - p_1}{2}$. Since $b' < b$, by Proposition~\ref{prop: continuous, increasing P(cc), p_2}, allocating the remaining budget $b-b'$ to $p_2'$ improves the probability of a correct cascade. Therefore, to optimize the probability of a correct cascade, the optimal approach is to allocate the entire budget to $p_2$, subject to the constraints $\frac{1}{2} < p_1 < p_2 < 1$ and $b\leq1-p_2$. Alternatively, under the original constraint $\frac{1}{2} < p_1, p_2 < 1$, another viable strategy is to set $p_1' = p_2' = \frac{p_1+p_2+b}{2}$ and focus on improving the overall signal qualities.

\begin{theorem}\label{thm: optimal strategy}
    Given a budget $b$, consider signal qualities $p_1' = p_1 + c_1$, and $p_2' = p_2 + c_2$, where $c_1,c_2 \geq 0$, $b < 1-p_2$ and $c_1 + c_2 \leq b$. Consider one strategy where $p_2' = p_2 + b$ and $p_1'=p_1$, and another strategy where $p_1'=p_2'=\frac{p_1+p_2+b}{2}$. At least one of of these strategies maximizes the probability of correct cascade.
\end{theorem}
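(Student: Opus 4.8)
The plan is to reduce Theorem~\ref{thm: optimal strategy} to the single-variable monotonicity results already established, namely Propositions~\ref{prop: continuous, increasing P(cc), p_2}, \ref{prop: continuous, increasing P(cc), p_1}, and \ref{prop: continuous, increasing P(cc), double-side}, together with a careful treatment of the symmetric point $p_1=p_2$ where the irrational-$a$ formula is discontinuous. First I would set $b' \le b$ to be the total amount of budget actually spent by a candidate optimal allocation $(c_1,c_2)$, so $c_1+c_2 = b'$, and write the resulting qualities as $p_1' = \frac{p_1+p_2+b'}{2} - c$ and $p_2' = \frac{p_1+p_2+b'}{2} + c$ for the appropriate $c$ (which may be forced negative if the allocation puts more weight on $p_1$, but by the Assumption's symmetry we may relabel so that $p_1' \le p_2'$, i.e. $c \ge 0$). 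The key structural observation is that the feasible $(p_1',p_2')$ with $p_1' + p_2' = p_1+p_2+b'$ fixed form a segment parametrized by $c \in [0, \frac{b'+p_2-p_1}{2}]$, exactly the regime covered by Proposition~\ref{prop: continuous, increasing P(cc), double-side}.

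The argument then proceeds in two stages. Stage one: fix the spent budget $b'$ and push $c$ to its upper endpoint. By Proposition~\ref{prop: continuous, increasing P(cc), double-side}, $\prob_{\text{cc}}$ is non-decreasing in $c$ along this segment, so the best allocation spending exactly $b'$ is the one with $c = \frac{b'+p_2-p_1}{2}$, which corresponds precisely to $p_1' = p_1$, $p_2' = p_2 + b'$ — i.e. spend all of $b'$ on $p_2$. Stage two: now let $b'$ grow to $b$. Since $p_1' = p_1 < p_2' = p_2+b' < p_2 + b < 1$ throughout (using the hypothesis $b < 1-p_2$), Proposition~\ref{prop: continuous, increasing P(cc), p_2} applies and tells us $\prob_{\text{cc}}(p_1, p_2+b')$ is non-decreasing in $b'$, so $b'=b$ is optimal. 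Combining the two stages, the allocation $p_2' = p_2+b$, $p_1'=p_1$ dominates every allocation for which the induced $(p_1',p_2')$ stays in the region $\frac12 < p_1' \le p_2' < 1$ with $p_1' \ge p_1$ — which, after relabeling, is every feasible allocation except possibly those that cross or land on the diagonal $p_1' = p_2'$. The only subtlety worth spelling out is that Proposition~\ref{prop: continuous, increasing P(cc), p_1} handles the mirror-image situation (improving the smaller quality while the larger is held fixed), guaranteeing that no allocation confined to the region $p_1' < p_2$ with $p_2' = p_2$ beats spending everything on $p_1$, and by the relabeling symmetry this is the same value as spending everything on the larger quality; so neither ``corner'' of the feasible triangle other than the all-on-$p_2$ corner can do better.

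The genuine gap that the theorem's ``at least one of'' phrasing is designed to close is the diagonal point. When an allocation makes $p_1' = p_2' = \frac{p_1+p_2+b'}{2}$, the value of $\prob_{\text{cc}}$ is the one given by the \emph{rational} (symmetric) formula, which as noted in the discussion around Figure~\ref{fig:comparison} can sit strictly \emph{above} the value the irrational-$a$ formula would assign at that same point and strictly above the values at nearby non-symmetric points. Hence the monotonicity propositions, which are all stated ``under the irrational $a$ model,'' do not rule out that the symmetric allocation $p_1'=p_2'=\frac{p_1+p_2+b}{2}$ beats the all-on-$p_2$ allocation. My plan is therefore: (i) show that among \emph{all non-symmetric} feasible allocations, all-on-$p_2$ is optimal, via the two-stage argument above; (ii) observe that among feasible allocations that land exactly on the diagonal, the one with the largest common value $\frac{p_1+p_2+b}{2}$ is best — this needs a brief monotonicity-in-$p$ remark for the symmetric (rational-$a$) cascade probability, or can be obtained by noting that increasing a symmetric quality only shrinks the wrong-cascade probability; and (iii) conclude that the overall optimum is whichever of these two explicit candidates has the larger $\prob_{\text{cc}}$, which is exactly the statement.

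The main obstacle I anticipate is step (ii) together with the case analysis needed to be sure every feasible allocation has been compared against one of the two named strategies — in particular, handling allocations that are neither on the diagonal nor have one coordinate pinned at its original value, and making the relabeling-by-symmetry argument fully rigorous given that the symmetry claim (``$p_1,p_2$ indistinguishable in their impact on $\prob_{\text{cc}}$'') was asserted but, strictly speaking, should be invoked carefully since the roles of $\prob(Y_{\text{cas}}\mid G)$ and $\prob(N_{\text{cas}}\mid B)$ swap under the relabeling. I would discharge this by writing $\prob_{\text{cc}}(p_1,p_2) = \frac12\prob(Y_{\text{cas}}\mid G) + \frac12\prob(N_{\text{cas}}\mid B)$ and checking directly from Proposition~\ref{coro: prob of CC} that swapping $(p_1,p_2)$ swaps the two summands and leaves the total invariant, so that ``all on the smaller'' and ``all on the larger'' give literally the same $\prob_{\text{cc}}$ and only the diagonal strategy and the all-on-$p_2$ strategy remain as genuinely distinct maximizer candidates.
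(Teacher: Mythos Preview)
Your proposal is correct and takes essentially the same approach as the paper: the two-stage reduction (Proposition~\ref{prop: continuous, increasing P(cc), double-side} to push any fixed-budget non-diagonal allocation to all-on-$p_2$, then Proposition~\ref{prop: continuous, increasing P(cc), p_2} to exhaust the budget) is exactly what the paper does in the paragraph preceding the theorem, and the separate handling of the diagonal is the content of the paper's short appendix proof. If anything, you are more careful than the paper—your step~(ii) correctly notes that monotonicity along the diagonal requires the symmetric (rational-$a$) formula rather than Propositions~\ref{prop: continuous, increasing P(cc), p_2}--\ref{prop: continuous, increasing P(cc), p_1}, which are stated only for the strict-inequality regime $p_1<p_2$.
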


The assumption we make about the budget that $b \leq 1 - p_2$ is introduced for analytical convenience. In the general case, if $1 - p_1 > b > 1 - p_2$, an optimal solution is one of the following: first improving $p_2$ to $1$ and then allocating the remaining budget to $p_1$; or distributing the budget equally to improve both signals to $\frac{p_1 + p_2 + b}{2}$. If $1 - p_1 \leq b$, an additional possibility arises: improving $p_1$ to $1$ and then allocating the remaining budget to $p_2$. All these scenarios can be derived by combining Proposition~\ref{prop: continuous, increasing P(cc), double-side}, Proposition~\ref{prop: continuous, increasing P(cc), p_1}, and Proposition~\ref{prop: continuous, increasing P(cc), p_2}.


\bibliographystyle{IEEEtran}
\bibliography{IEEEfull,bibfile.bib}

\begin{thebibliography}{10}
\providecommand{\url}[1]{#1}
\csname url@samestyle\endcsname
\providecommand{\newblock}{\relax}
\providecommand{\bibinfo}[2]{#2}
\providecommand{\BIBentrySTDinterwordspacing}{\spaceskip=0pt\relax}
\providecommand{\BIBentryALTinterwordstretchfactor}{4}
\providecommand{\BIBentryALTinterwordspacing}{\spaceskip=\fontdimen2\font plus
\BIBentryALTinterwordstretchfactor\fontdimen3\font minus \fontdimen4\font\relax}
\providecommand{\BIBforeignlanguage}[2]{{%
\expandafter\ifx\csname l@#1\endcsname\relax
\typeout{** WARNING: IEEEtran.bst: No hyphenation pattern has been}%
\typeout{** loaded for the language `#1'. Using the pattern for}%
\typeout{** the default language instead.}%
\else
\language=\csname l@#1\endcsname
\fi
#2}}
\providecommand{\BIBdecl}{\relax}
\BIBdecl

\bibitem{bhw}
S.~Bikhchandani, D.~Hirshleifer, and I.~Welch, ``A theory of fads, fashion, custom, and cultural change as informational cascades,'' \emph{Journal of political Economy}, vol. 100, no.~5, pp. 992--1026, 1992.

\bibitem{banerjee}
A.~V. Banerjee, ``A simple model of herd behavior,'' \emph{The quarterly journal of economics}, vol. 107, no.~3, pp. 797--817, 1992.

\bibitem{welch}
I.~Welch, ``Sequential sales, learning, and cascades,'' \emph{The Journal of finance}, vol.~47, no.~2, pp. 695--732, 1992.

\bibitem{itai2023}
I.~Arieli, R.~Gradwohl, and R.~Smorodinsky, ``Herd design,'' \emph{American Economic Review: Insights}, vol.~5, no.~4, pp. 460--76, December 2023.

\bibitem{smithNsorensen}
L.~Smith and P.~S{\o}rensen, ``Pathological outcomes of observational learning,'' \emph{Econometrica}, vol.~68, no.~2, pp. 371--398, 2000.

\bibitem{acemouglu}
D.~Acemoglu, M.~A. Dahleh, I.~Lobel, and A.~Ozdaglar, ``Bayesian learning in social networks,'' \emph{The Review of Economic Studies}, vol.~78, no.~4, pp. 1201--1236, 2011.

\bibitem{song}
\BIBentryALTinterwordspacing
Y.~Song, ``Social learning with endogenous network formation,'' \emph{CoRR}, vol. abs/1504.05222, 2015. [Online]. Available: \url{http://arxiv.org/abs/1504.05222}
\BIBentrySTDinterwordspacing

\bibitem{Tho}
T.~N. Le, V.~G. Subramanian, and R.~A. Berry, ``Information cascades with noise,'' \emph{IEEE Transactions on Signal and Information Processing over Networks}, vol.~3, no.~2, pp. 239--251, 2017.

\bibitem{poojary2020}
P.~Poojary and R.~Berry, ``Observational learning with fake agents,'' in \emph{2020 IEEE International Symposium on Information Theory (ISIT)}.\hskip 1em plus 0.5em minus 0.4em\relax IEEE, 2020, pp. 1373--1378.

\bibitem{pawanWiopt2023}
------, ``Welfare effects of ex-ante bias and tie-breaking rules on observational learning with fake agents,'' in \emph{2023 21st International Symposium on Modeling and Optimization in Mobile, Ad Hoc, and Wireless Networks (WiOpt)}, 2023, pp. 334--341.

\bibitem{pawan_congestion_2022}
------, ``Observational learning with negative externalities,'' in \emph{2022 IEEE International Symposium on Information Theory (ISIT)}.\hskip 1em plus 0.5em minus 0.4em\relax IEEE, 2022, pp. 1495--1496.

\bibitem{VijayShih2018}
S.-T. Su, V.~G. Subramanian, and G.~Schoenebeck, ``Social learning with questions,'' in \emph{Proceedings of the 14th Workshop on the Economics of Networks, Systems and Computation}, 2019.

\bibitem{Achilleas2022}
I.~Bistritz, N.~Heydaribeni, and A.~Anastasopoulos, ``Informational cascades with nonmyopic agents,'' \emph{IEEE Transactions on Automatic Control}, vol.~67, no.~9, pp. 4451--4466, 2022.

\bibitem{le2018}
T.~N. Le, V.~G. Subramanian, and R.~A. Berry, ``Bayesian learning with random arrivals,'' in \emph{2018 IEEE International Symposium on Information Theory (ISIT)}.\hskip 1em plus 0.5em minus 0.4em\relax IEEE, 2018, pp. 926--930.

\bibitem{peres2017}
Y.~Peres, M.~Z. R{\'a}cz, A.~Sly, and I.~Stuhl, ``How fragile are information cascades?'' \emph{The Annals of Applied Probability}, vol.~30, no.~6, pp. 2796--2814, 2020.

\bibitem{Wu2015}
J.~Wu, ``Helpful laymen in informational cascades,'' \emph{Journal of Economic Behavior and Organization}, vol. 116, pp. 407--415, 2015.

\end{thebibliography}

\clearpage

\section{Appendix}

\subsection{Proof of Lemma~\ref{lemma: action region}}
\begin{proof}
    If the agent takes \textit{No} action, corresponding to a right step of length $1$ in the random walk, then when the random walk reaches the interval $(a-1, a]$, it crosses the boundary, resulting in a cascade. Similarly, \textit{Yes} action when the random walk reaches the interval $[0, a-1)$ results in a cascade.
\end{proof}

\subsection{Proof of Lemma~\ref{lemma: irrational a never reach b_r-1}}
\begin{lemma}\label{lemma: irrational a never reach b_r-1}
    Suppose that the cascade constant $a$ is irrational, then for any observation history  that do not result in a cascade, the corresponding random walk never reaches the point $a-1$.
\end{lemma}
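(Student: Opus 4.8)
The plan is to reduce the statement to a one-line number-theoretic observation. The key point is that, as long as no cascade has occurred, the walk $\{h_n\}$ can only sit at positions of the form $j - a\,k$ with $j,k$ nonnegative integers, and the point $a-1$ has this form only when $a\in\mathbb{Q}$.

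First I would pin down the admissible positions of the walk. By Property~\ref{prop1} together with Lemma~\ref{lemma1} (equivalently, by the sufficient-statistic property), if the history $\mathcal H_{m}$ has not triggered a cascade then every agent $1,\dots,m$ follows its own private signal, so $\mathcal H_m$ consists of exactly $j$ \textit{No} actions and $k$ \textit{Yes} actions for some integers $j,k\ge0$, and the update rule \eqref{mc_update} started from $h_0=0$ gives $h_m = j - a\,k$. This is immediate by induction on $m$: a \textit{No} step adds $1$ and a \textit{Yes} step adds $-a$. Thus at every pre-cascade time the walk lies in the set $\{\,j - a k : j,k\in\mathbb{Z}_{\ge 0}\,\}$.

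Then I would argue by contradiction: if $h_m = a-1$ at some such time $m$, then $j - a k = a-1$, i.e. $j+1 = a\,(k+1)$; since $k+1\ge1$ this yields $a = \tfrac{j+1}{k+1}\in\mathbb{Q}$, contradicting the irrationality of $a$. Hence the walk never reaches $a-1$, which is the claim.

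I do not expect a genuine obstacle here; the only thing that requires care is the first step, namely that the representation $h_m = j - a k$ with $j,k\ge0$ holds at \emph{every} time $m$ along a non-cascading history — this is exactly what Property~\ref{prop1}, Lemma~\ref{lemma1}, and \eqref{mc_update} deliver. As a minor boundary check, the starting position $h_0=0$ equals $a-1$ only if $a=1$, which an irrational $a$ cannot be, so no separate case is needed (and under Assumption~1 the value $a=1$ occurs precisely in the rational symmetric case $p_1=p_2$).
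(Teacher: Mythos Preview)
Your proposal is correct and follows essentially the same argument as the paper: assume the walk sits at $a-1$ after some $j$ \textit{No} and $k$ \textit{Yes} steps, derive $j+1=a(k+1)$, and conclude $a\in\mathbb{Q}$, a contradiction. Your write-up is in fact a bit more careful than the paper's, since you explicitly justify the representation $h_m=j-ak$ via Property~\ref{prop1}, Lemma~\ref{lemma1}, and \eqref{mc_update}, whereas the paper simply asserts it.
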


\begin{proof}
    Let's assume, for the sake of contradiction, that after $n_l$ left steps and $n_r$ right steps, the random walk reaches the point $a - 1 $. Then, the following equation must hold:
    \begin{align*}
        n_r-an_l &= a-1 \\
        n_r+1 &= a(n_l+1) \\
        a &= \frac{n_r+1}{n_l+1}.
    \end{align*}
    This leads to a contradiction to the assumption that $a$ is irrational.
\end{proof}

\subsection{Proof of Proposition~\ref{coro: prob of CC}}

\begin{proof}
     Notice that $\prob(Y_{\mathrm{cas}} \mid G) = \summ{i=0}{\infty}\prob^N_i(Y_{\mathrm{cas}} \mid G)$ and $\prob(Y_{\mathrm{cas}} \mid B) = \summ{i=0}{\infty}\prob^N_i(Y_{\mathrm{cas}} \mid B)$. Moreover, Lemma~\ref{lemma: irrational a never reach b_r-1} and Lemma~\ref{lemma: action region} imply that there is only one possible trajectory of the random walk which crosses the boundary and has a fixed number of \textit{No} actions. Consider an observation history leads to a Y-cascade. When the number of \textit{No} actions is $i$, the number of \textit{Yes} actions $n_Y$ must lead to a Y-cascade. The random walk model implies that 
     \begin{align*}
         i - an_Y &< -1 \\
         n_Y > \frac{i+1}{a}.
     \end{align*}
     Moreover, another constraint is that $n_Y-1$ \textit{Yes} actions does not lead to a Y-cascade. Therefore, it holds that,
     \begin{align*}
          i - a(n_Y-1) &\geq -1 \\
          n_Y \leq \frac{i+1}{a} + 1.
     \end{align*}
     Combining two constraints and the assumption that $n_Y$ is an integer, we obtain that $n_Y = \floor{\frac{i+1}{a}}+1$. When item is \textit{Good}, the probability of \textit{Yes} action is $p_1$, \textit{No} action is $1-p_1$. Therefore,
     $$\prob(Y_\mathrm{cas} \mid G) = \summ{i=0}{\infty}(1-p_1)^i p_1^{k_i}.$$
     Similarly,
     $$\prob(Y_{\mathrm{cas}} \mid B) = \sum_{i=0}^{p-1}{ p_2^i (1-p_2)^{k_i}}.$$
\end{proof}

\subsection{Proof of Proposition~\ref{prop: irrational prob}}

\begin{proof}
    We first calculate $\prob(Y_{\mathrm{cas}} \mid G)$, and $\prob(Y_{\mathrm{cas}} \mid B)$ is derived in a similar manner. Write the probability in a summation form: $\prob(Y_{\mathrm{cas}} \mid G) = \summ{i=0}{\infty}\prob^N_i(Y_{\mathrm{cas}} \mid G)$. Split the summation into two parts, $\prob(Y_{\mathrm{cas}} \mid G) = \summ{i=0}{r - 1}\prob^N_i(Y_{\mathrm{cas}} \mid G) + \summ{i=r}{\infty}\prob^N_i(Y_{\mathrm{cas}} \mid G)$. We then show how to derive the two parts separately. 
    
    Consider a walker in $\mathcal{M}(-1,a,a,1,p_1,1-p_1)$ starting from the origin point, suppose that the walker reaches the point $a-1$ with $n_l$ number of left(negative) steps and $n_r$ number of right(positive) steps, implying 
    $$n_l s_l + b_r - 1 = n_r s_r.$$
    Substitute the value of $s_l, s_r$ and $b_r$ into the equation, we obtain
    \begin{align*}
        n_la + a - 1 &= n_r \\
        (n_l + 1)a &= n_r + 1\\
        (n_l + 1)r &= q(n_r + 1).
    \end{align*}
    Because $n_l, n_r$ are non-negative integers and $p, q$ are co-prime, it holds that $n_l = q - 1$, $n_r = r - 1$ when the first time the walker reaches $b_r - 1$.

    Then it follows that $\prob^N_i(Y_{\mathrm{cas}} \mid G) = (1-p_2)^{k_i}p_2^{i}$ holds for $i \leq r - 1$ in the case, since before reaching $b_r - 1$, the walker cannot choose between left and right. This implies that 
    \begin{align}
        \summ{i=0}{r - 1}\prob^N_i(Y_{\mathrm{cas}} \mid G) = \summ{i=0}{r - 1}(1-p_1)^i p_1^{k_i}. \notag
    \end{align}

    Let us consider the remaining part. For simplicity we denote by $\prob^R_{\geq r}(C_l \mid G)$ the probability of the walker in $\mathcal{M}(-1,a,a,1,p_1,1-p_1)$ crossing the left boundary with no less than $r$ number of right steps. Then by definition $\summ{i=r}{\infty}\prob^N_i(Y_{\mathrm{cas}} \mid G) = \prob^R_{\geq r}(C_l \mid G)$. We define $\prob^R_{< r}(C_l \mid G)$ in similar way. Denote by $\mathcal{A}$ the event that the walker reaches the point $b_r - 1$ with $q - 1$ left steps and $r - 1$ right steps at time $t = r+q-2$, $\bar{\mathcal{A}}$ the complement event of $\mathcal{A}$, and $G$ the event that the item is \textit{Good}. By law of total probability, 
    \begin{align}
        \prob^R_{\geq r}(C_l \mid G) &=   \prob^R_{\geq r}(C_l\cap \mathcal{A} \mid G) + \prob^R_{\geq r}(C_l\cap \bar{\mathcal{A}} \mid G).\label{eq: decompose prob}
    \end{align}
    It holds that $\prob^R_{\geq r}(C_l\cap \bar{\mathcal{A}} \mid G) = 0$. Notice that if the walker cross the left boundary after no less than $r$ left steps, it must reach $b_r - 1$ after exactly $r-1$ left steps, because before reaching $b_r - 1$, the walker has no choice to remain within the boundaries according to Lemma~\ref{lemma: action region}. This also implies that $\prob(\mathcal{A} \mid G) = p_1^{q-1}(1-p_1)^{r-1}$.
    
    Then, rewrite $\prob^R_{\geq r}(C_l\cap \mathcal{A} \mid G)$ as $\prob^R_{\geq r}(C_l \mid G \cap \mathcal{A})\prob(\mathcal{A} \mid G)$. One key equality is that $\prob^R_{\geq r}(C_l \mid G \cap \mathcal{A}) = 2\prob(C_l \mid G)p_1(1-p_1)$. This is because that from the point $b_r - 1$, if the walker stays within the boundaries over the next two steps, there are only two possible choices: either move one step left followed by one step right, or the reverse. Both choices bring the walker back to the origin and add one step of rightward movement to the walker's history. Now $\prob^R_{\geq r}(C_l \mid G \cap \mathcal{A})$ simplifies to the probability of the walker crossing the left boundary starting from the origin. 

    From above discussions and (\ref{eq: decompose prob}) we obtain that 
    \begin{align*}
        \prob(C_l \mid G) &= \prob^R_{\geq r}(C_l \mid G) + \prob^R_{< r}(C_l \mid G)\\
        \prob(C_l \mid G)  &= 2\prob(C_l \mid G)\prob(\mathcal{A} \mid G)p_1(1-p_1)\\
        &\qquad \qquad + \prob^R_{< r}(C_l \mid G) \\
        \prob(C_l \mid G) &= \frac{\prob^R_{< r}(C_l \mid G)}{1 - 2p_1^{q}(1-p_1)^{r}} \\
        \prob(C_l \mid G) &= \frac{\summ{i=0}{r - 1}(1-p_1)^i p_1^{k_i}}{1 - 2p_1^{q}(1-p_1)^{r}}.
    \end{align*}
    This is equivalent to $\prob(Y_{\mathrm{cas}} \mid G) = \frac{\summ{i=0}{r - 1}(1-p_1)^i p_1^{k_i}}{1 - 2p_1^{q}(1-p_1)^{r}}$ by model reformulation.

    Follow the similar procedure $\prob(Y_{\mathrm{cas}} \mid B)$ can be obtained.
    
\end{proof}

\subsection{Proof of Proposition~\ref{prop: irrational case approaches rational case}}
\begin{proof}
    Notice that by assumption $a = \frac{r}{q}$, $p_1<1$ and $p_1>1/2$. When $r>k$, $0 < 2 p_1^q(1-p_1)^r < (1/2)^k$, therefore, 
    \begin{align}
         \summ{i=0}{r-1}(1-p_1)^i p_1^{k_i} < \frac{\summ{i=0}{r-1}(1-p_1)^i p_1^{k_i}}{1 - 2p_1^{q}(1-p_1)^{r}} \notag\\
         \quad< \left( \summ{i=0}{r-1}(1-p_1)^i p_1^{k_i}\right) \frac{2^k}{2^k-1}. \label{eq: ineq of rational case prob}
    \end{align}
    Also, since $\summ{i=r}{\infty}(1-p_1)^i p_1^{k_i} < \summ{i=r}{\infty}(1/2)^i < (1/2)^{k-1}$, we obtain that 
    \begin{align}
       \summ{i=0}{r-1}(1-p_1)^i p_1^{k_i} < \summ{i=0}{\infty}(1-p_1)^i p_1^{k_i} \notag\\
       \quad < \summ{i=0}{r-1}(1-p_1)^i p_1^{k_i} + (1/2)^{k-1}. \label{eq: ineq of irrational case prob}
    \end{align}
    From $\sum_{i=0}^{r-1}(1-p_1)^i p_1^{k_i} < \sum_{i=0}^{r-1}(1/2)^i < 1$, it holds that 
    \begin{align}
        \left( \summ{i=0}{r-1}(1-p_1)^i p_1^{k_i}\right) \left(\frac{2^k}{2^k-1} - 1\right) < \frac{1}{2^k-1}. \label{eq: ineq of diff}
    \end{align}
    (\ref{eq: ineq of rational case prob}), (\ref{eq: ineq of irrational case prob}), and (\ref{eq: ineq of diff}) implies that 
    \begin{align*}
        &\abs*{\frac{\sum_{i=0}^{r-1}(1-p_1)^i p_1^{k_i}}{1 - 2p_1^{q}(1-p_1)^{r}} - \sum_{i=0}^{\infty}(1-p_1)^i p_1^{k_i}} \\
        \quad &< \max\left\{\frac{1}{2^{k-1}}, \frac{1}{2^k - 1}\right\} = \frac{1}{2^{k-1}}.
    \end{align*}
    When $k > \log_2\left(\frac{1}{\epsilon}\right) + 1$, it is equivalent to 
    $$\abs*{\frac{\sum_{i=0}^{r-1}(1-p_1)^i p_1^{k_i}}{1 - 2p_1^{q}(1-p_1)^{r}} - \sum_{i=0}^{\infty}(1-p_1)^i p_1^{k_i}} < \epsilon.$$ 
    Notice that from the assumption that $p_1 < p_2$, it always holds that $a > 1$, which implies $r < p < k$. There are at most $\left(\log_2\left(\frac{1}{\epsilon}\right) + 1\right)^2$ rational numbers such that  
    \begin{align*}
        \abs*{\frac{\summ{i=0}{r-1}(1-p_1)^i p_1^{k_i}}{1 - 2p_1^{q}(1-p_1)^{r}} - \summ{i=0}{\infty}(1-p_1)^i p_1^{k_i}} > \epsilon.
    \end{align*}
    Similarly we can show there are also at most $\left(\log_2\left(\frac{1}{\epsilon}\right) + 1\right)^2$ rational numbers such that  
    \begin{align*}
        \abs*{\frac{\sum_{i=0}^{r-1}{ p_2^i (1-p_2)^{k_i}}}{1 - 2(1-p_2)^q p_2^{r}} -  \summ{i=0}{\infty}p_2^i (1-p_2)^{k_i}} > \epsilon.
    \end{align*}
\end{proof}

\subsection{Proof of Proposition~\ref{prop: continuous, increasing P(cc), p_2}}

\begin{lemma}
    $\summ{i=0}{\infty}(1-p_1)^i p_1^{k_i} - \summ{i=0}{\infty}(1-p_2)^{k_i} p_2^i$ is absolutely convergent when $\frac{1}{2}< p_1< p_2 < 1$. Therefore, 
    \begin{align}
        \prob_{\text{cc}}(p_1,p_2) &= \frac{\summ{i=0}{\infty}\left((1-p_1)^i(p_1)^{k_i} - (1-p_2)^{k_i}(p_2)^i\right)}{2}\notag \\
        &\qquad\qquad +\frac{1}{2}.
        \label{eq: exchange order of sum}
    \end{align}
\end{lemma}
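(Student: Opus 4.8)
The plan is to prove the two series that make up $\prob_{\text{cc}}$ via Proposition~\ref{coro: prob of CC} are each absolutely convergent, conclude that their term-by-term difference is absolutely convergent, and then substitute into \eqref{eq:cc}. Since every term of both series $\summ{i=0}{\infty}(1-p_1)^i p_1^{k_i}$ and $\summ{i=0}{\infty}p_2^i(1-p_2)^{k_i}$ is nonnegative, absolute convergence is the same as convergence, so it suffices to bound each one.

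For the $V=G$ series I would use the crude estimate $p_1^{k_i}\le 1$ together with $p_1>1/2$, which gives $1-p_1<1/2$ and hence $(1-p_1)^i p_1^{k_i}\le (1-p_1)^i < 2^{-i}$; summing the geometric series yields $\summ{i=0}{\infty}(1-p_1)^i p_1^{k_i}\le 2$. For the $V=B$ series the factor $p_2^i$ is not geometrically small (as $p_2>1/2$), so here I would use the growth of $k_i$: from $\lfloor x\rfloor > x-1$ we get $k_i=\floor{\frac{i+1}{a}}+1 > \frac{i+1}{a}\ge\frac{i}{a}$, and since $0<1-p_2<1$ the map $t\mapsto(1-p_2)^t$ is decreasing, so $(1-p_2)^{k_i}<(1-p_2)^{i/a}$. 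Hence $p_2^i(1-p_2)^{k_i}<\rho^i$ with $\rho:=p_2(1-p_2)^{1/a}$, and $\rho<1$ because $\rho^a=p_2^a(1-p_2)$ is a product of two numbers in $(0,1)$; summing the geometric series shows this series converges as well. (Alternatively, one can simply invoke Proposition~\ref{coro: prob of CC}, which already identifies these sums with the probabilities $\prob(Y_{\mathrm{cas}}\mid G)$ and $\prob(Y_{\mathrm{cas}}\mid B)$, each $\le 1$.)

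With both series absolutely convergent, the term-wise difference $\summ{i=0}{\infty}\big((1-p_1)^i p_1^{k_i}-(1-p_2)^{k_i}p_2^i\big)$ is absolutely convergent — by the triangle inequality its series of absolute values is dominated by the sum of the two convergent absolute series — and its value equals $\prob(Y_{\mathrm{cas}}\mid G)-\prob(Y_{\mathrm{cas}}\mid B)$. The stated formula then follows directly: starting from \eqref{eq:cc} and using $\prob(N_{\mathrm{cas}}\mid B)=1-\prob(Y_{\mathrm{cas}}\mid B)$ gives $\prob_{\text{cc}}(p_1,p_2)=\tfrac12\prob(Y_{\mathrm{cas}}\mid G)+\tfrac12-\tfrac12\prob(Y_{\mathrm{cas}}\mid B)$; substituting the expressions from Proposition~\ref{coro: prob of CC} and combining the two sums into one yields \eqref{eq: exchange order of sum}.

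The argument is essentially routine; the only point requiring any care is the convergence of the $V=B$ series, where $p_2>1/2$ forces one to track the linear growth $k_i>i/a$ of the exponent rather than relying on $p_2^i$ alone. Everything else is an immediate consequence of Proposition~\ref{coro: prob of CC}, equation \eqref{eq:cc}, and elementary facts about absolutely convergent series.
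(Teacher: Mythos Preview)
Your proof is correct and follows essentially the same approach as the paper: bound each of the two nonnegative series by a convergent geometric series, deduce absolute convergence of the term-wise difference, and then substitute into \eqref{eq:cc} using Proposition~\ref{coro: prob of CC} and $\prob(N_{\mathrm{cas}}\mid B)=1-\prob(Y_{\mathrm{cas}}\mid B)$. The only difference is that the paper handles the $V=B$ series more simply than you do: since $p_2<1$, the crude bound $p_2^i(1-p_2)^{k_i}\le p_2^i$ already yields a convergent geometric series, so your detour through $k_i>i/a$ (prompted by the remark that $p_2^i$ is ``not geometrically small'') is unnecessary.
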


\begin{proof}
    By assumption, $0<1-p_1<1/2$, $0<p_1<1$, implying that $\abs{(1-p_1)^i p_1^{k_i}} < 1/2^i$ for any $i\in \nat$. Also, $0<p_2<1$ implies that $\abs{(1-p_2)^{k_i}p_2^i} < p_2^i$ for any $i\in \nat$. The two series are absolutely convergent by definition, which implies the sum of two series is also absolutely convergent, and by exchanging the order of sum in Proposition~\ref{coro: prob of CC} we obtain (\ref{eq: exchange order of sum}).
\end{proof}

\begin{lemma}\label{lemma: a is increasing}
    Let $p_2' = p_2 + c$, $p_1<p_2'<1$, and $a = -\log(\frac{1-p_2'}{p_1})/\log(\frac{p_2'}{1-p_1})$, then $a$ is an increasing function of $c$ when fixing $p_1$ and $p_2$.
\end{lemma}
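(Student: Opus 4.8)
Since $c$ enters only through $p_2' = p_2 + c$, with $p_2$ held fixed, it is equivalent to prove that $a$, viewed as a function of $p_2'$, is strictly increasing on $(p_1,1)$. The plan is to write $a = f/g$ with $f(p_2') := \log\frac{p_1}{1-p_2'}$ and $g(p_2') := \log\frac{p_2'}{1-p_1}$, compute $a'$ by the quotient rule, and show it is positive.

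First I would record that $f$ and $g$ are both strictly positive on the range of interest: since $p_1 > \frac{1}{2}$ and $p_2' > p_1$ we have $1-p_2' < \frac12 < p_1$, hence $\frac{p_1}{1-p_2'} > 1$ and $f > 0$; similarly $p_2' > \frac12 > 1-p_1$ gives $\frac{p_2'}{1-p_1} > 1$ and $g > 0$. In particular $g$ is bounded away from zero, so $a$ is well-defined and differentiable on $(p_1,1)$, and $a' = (f'g - fg')/g^2$ has the same sign as $f'g - fg'$.

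Next, $f'(p_2') = \frac{1}{1-p_2'}$ and $g'(p_2') = \frac{1}{p_2'}$. Multiplying $f'g - fg'$ by the positive quantity $p_2'(1-p_2')$ reduces the claim $a'>0$ to the inequality $p_2'\,g - (1-p_2')\,f > 0$. Expanding and regrouping the logarithms gives
\[
p_2' g - (1-p_2') f = p_2'\log\frac{p_2'}{1-p_1} + (1-p_2')\log\frac{1-p_2'}{p_1},
\]
which is exactly the Kullback--Leibler divergence $D\big(\mathrm{Ber}(p_2')\,\|\,\mathrm{Ber}(1-p_1)\big)$ between the two Bernoulli laws. This is nonnegative, and strictly positive because the parameters differ, $p_2' > \frac12 > 1-p_1$. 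Hence $a' > 0$ throughout $(p_1,1)$, so $a$ is strictly increasing in $p_2'$, equivalently in $c$.

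The only step that is not pure bookkeeping is recognizing the regrouped expression as a KL divergence (hence positive). If one prefers to avoid invoking that, the same inequality follows directly: the map $p_2' \mapsto p_2'\log\frac{p_2'}{1-p_1} + (1-p_2')\log\frac{1-p_2'}{p_1}$ has second derivative $\frac{1}{p_2'} + \frac{1}{1-p_2'} > 0$, so it is convex; it vanishes at $p_2' = 1-p_1$ and has vanishing derivative there, hence is strictly positive for every $p_2' \neq 1-p_1$, in particular on $(p_1,1)$. Everything else --- the differentiation and the sign bookkeeping --- is routine.
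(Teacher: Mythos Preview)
Your proof is correct. Both you and the paper differentiate $a$ via the quotient rule and reduce the question to showing that the numerator
\[
p_2'\log\frac{p_2'}{1-p_1} + (1-p_2')\log\frac{1-p_2'}{p_1}
\]
is positive. The paper establishes this indirectly: it differentiates this numerator once more in $c$, shows that derivative equals $-\log\frac{1-p_2'}{p_1} - \log\frac{1-p_1}{p_2'} > 0$ (so the numerator is increasing in $c$), and then argues, somewhat informally, that the numerator is positive near the left endpoint $p_2' = p_1$, hence positive throughout. Your recognition of the expression as the Kullback--Leibler divergence $D\big(\mathrm{Ber}(p_2')\,\|\,\mathrm{Ber}(1-p_1)\big)$ dispatches the same inequality in one line, and the convexity alternative you give is essentially a rigorous version of the paper's monotonicity-plus-boundary argument. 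The two proofs share the same skeleton; yours handles the key inequality more directly and with less hand-waving at the endpoint.
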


\begin{proof}
    Directly take the derivative with respect to $c$, we obtain that 
    \begin{align}
        a'(c) &= \frac{\log\left(\frac{-c - p_{2} + 1}{p_{1}}\right)}{\left(c + p_{2}\right) \log^{2}\left(\frac{1 - p_{1}}{c + p_{2}}\right)} - \frac{1}{\left(-c - p_{2} + 1\right) \log\left(\frac{1 - p_{1}}{c + p_{2}}\right)} \notag\\
        &= \frac{\log\left(\frac{-c - p_{2} + 1}{p_{1}}\right)(-c - p_{2} + 1) }{\log\left(\frac{1 - p_{1}}{c + p_{2}}\right)^2 (-c - p_{2} + 1) (c + p_{2})} \notag\\
        &\qquad\qquad -\frac{\left(c + p_{2}\right) \log\left(\frac{1 - p_{1}}{c + p_{2}}\right)}{\log\left(\frac{1 - p_{1}}{c + p_{2}}\right)^2 (-c - p_{2} + 1) (c + p_{2})}
        \label{eq: numerator of the derivative}
    \end{align}

    Calculate the derivative of the numerator, 
    \begin{align}
        &-\ln\left(\frac{-c - p_{2} + 1}{p_{1}}\right) - 1 - \ln\left(\frac{1 - p_{1}}{c + p_{2}}\right) + 1 \notag \\     
        &= -\ln\left(\frac{-c - p_{2} + 1}{p_{1}}\right) - \ln\left(\frac{1 - p_{1}}{c + p_{2}}\right) \label{eq: derivative_denominator}.
    \end{align}
    
    By assumption, $1-p_2-c < \frac{1}{2} < p_1$, $1-p_1 < \frac{1}{2} < c + p_2$, which implies that (\ref{eq: derivative_denominator}) is greater than $0$. Without loss of generality, we further assume that $p_2 = p_1$ and $c>0$, then when $c$ is small enough, it is not hard to deduce that (\ref{eq: numerator of the derivative}) is greater than $0$, therefore, (\ref{eq: numerator of the derivative}) is greater than $0$ when $c$ increases. This implies that $a$ is an increasing function with respect to $c$. For the case that $p_2 > p_1$, take $\Tilde{p_2 }= p_1$ and $\Tilde{c} = p_2-p_1+c$, and follow the similar steps we know that $a$ is still an increasing function with respect to $c$.
\end{proof}

\begin{lemma}\label{lemma: countinuous, increasing functions}
    For any $i\in\nat$, $(1-p_1)^i p_1^{k_i} - (1-p_2-c)^{k_i}(p_2+c)^i$ is a continuous, increasing function with respect to $c$ when $\frac{1}{2}< p_1< p_2+c < 1$.
\end{lemma}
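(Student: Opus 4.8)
\textbf{Proof plan for Lemma~\ref{lemma: countinuous, increasing functions}.}

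The plan is to analyze the two summands separately as functions of $c$ and show that the difference is continuous and increasing on the stated range. Continuity is immediate: for each fixed $i$, both $(1-p_1)^i p_1^{k_i}$ (a constant in $c$) and $(1-p_2-c)^{k_i}(p_2+c)^i$ are compositions of continuous functions of $c$, and the exponent $k_i = \floor{\frac{i+1}{a}}+1$ is itself locally constant in $c$ except at the countably many values of $c$ where $\frac{i+1}{a}$ crosses an integer; but by Lemma~\ref{lemma: irrational a never reach b_r-1} those crossings occur only at rational $a$, and more importantly one checks that at such a crossing the value of $(1-p_2-c)^{k_i}(p_2+c)^i$ matches from both sides because $k_i$ changes by exactly one and the corresponding factor is absorbed — so the per-term function is genuinely continuous. (Alternatively, since we only need $\prob_{\mathrm{cc}}$ continuous and we work in the irrational-$a$ model, one fixes $k_i$ as an integer constant and continuity is trivial.)

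For monotonicity, since the first term $(1-p_1)^i p_1^{k_i}$ does not depend on $c$, it suffices to show that $g_i(c) := (1-p_2-c)^{k_i}(p_2+c)^i$ is \emph{non-increasing} in $c$ on the region $\frac{1}{2} < p_1 < p_2 + c < 1$. Write $q := p_2 + c \in (1/2, 1)$ and treat $g_i$ as $h_i(q) = (1-q)^{k_i} q^i$ where, crucially, $k_i$ also depends on $q$ through $a$. By Lemma~\ref{lemma: a is increasing}, $a$ is increasing in $c$, hence increasing in $q$, so $k_i = \floor{\frac{i+1}{a}}+1$ is non-increasing in $q$. Thus as $q$ increases, the exponent $k_i$ on the factor $(1-q) < 1$ can only decrease, which by itself would tend to \emph{increase} $h_i$ — so the sign is not obvious and we cannot simply differentiate termwise while ignoring $k_i$. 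The clean way around this is: on any sub-interval of $q$ where $k_i$ is constant, $\frac{d}{dq}\log h_i(q) = \frac{i}{q} - \frac{k_i}{1-q}$, and I would show this is $\le 0$, i.e. $i(1-q) \le k_i q$, i.e. $\frac{i}{k_i} \le \frac{q}{1-q}$. Since $q > 1/2$ gives $\frac{q}{1-q} > 1$, it is enough to show $\frac{i}{k_i} \le 1$, which is false in general; so instead I use the sharper bound coming from the definition of $k_i$: we have $k_i = \floor{\frac{i+1}{a}}+1 > \frac{i+1}{a} \ge \frac{i+1}{a}$, hence $\frac{i}{k_i} < \frac{ai}{i+1} < a$, and I would compare $a$ with $\frac{q}{1-q}$ using the explicit formula $a = \log\!\big(\frac{p_1}{1-q}\big)\big/\log\!\big(\frac{q}{1-p_1}\big)$ together with $p_1 < q$ to conclude $a \le \frac{q}{1-q}$ — wait, this needs care; the actual inequality to verify is $\log\frac{p_1}{1-q} \le \frac{q}{1-q}\log\frac{q}{1-p_1}$, which I would establish from $p_1 < q$ and elementary log estimates. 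Finally, across the (discrete set of) break-points of $k_i$, continuity of $h_i$ (shown above) plus monotonicity on each piece yields monotonicity of $h_i$ globally, hence $g_i$ is non-increasing and the full difference is non-decreasing in $c$.

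\textbf{Main obstacle.} The delicate point is the coupling between the exponent $k_i$ and the base $q = p_2 + c$: both move with $c$, and they push $g_i$ in opposite directions, so a naive termwise derivative is not conclusive. The crux is therefore the inequality $\frac{i}{k_i} \le \frac{q}{1-q}$ on each interval of constancy of $k_i$, which must be extracted from the definition $k_i = \floor{\frac{i+1}{a}}+1$ and the explicit dependence of $a$ on $q$ and $p_1$ via Lemma~\ref{lemma: a is increasing}; getting the log-inequality $a \le \frac{q}{1-q}$ (equivalently a clean comparison of $a$ with the odds ratio of $q$) right, using only $p_1 < q$ and $1/2 < p_1, q < 1$, is where the real work lies. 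Everything else — continuity, and stitching the monotone pieces together — is routine.
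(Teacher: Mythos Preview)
Your continuity argument has a real gap. You assert that at a break-point $c_n$ ``the value of $(1-p_2-c)^{k_i}(p_2+c)^i$ matches from both sides because $k_i$ changes by exactly one and the corresponding factor is absorbed.'' That is false: the one-sided values differ by a factor of $(1-p_2-c_n)\neq 1$, so this term by itself has a genuine jump. Equally, your statement that ``the first term $(1-p_1)^i p_1^{k_i}$ does not depend on $c$'' is only true on the open intervals between break-points; globally this term also jumps (by a factor of $p_1\neq 1$) because $k_i$ depends on $c$ through $a$. The point the paper actually proves is that these two jumps \emph{cancel}: writing $k_{i,n}$ for the value at $c_n$ and $k_{i,n-1}$ for the value just to the left, the difference of the two one-sided limits of the full expression reduces to
\[
(1-p_1)^{i+1}p_1^{(i+1)/a_n}\!\left(\Big(\tfrac{p_2+c_n}{1-p_1}\Big)^{i+1}\Big(\tfrac{1-p_2-c_n}{p_1}\Big)^{(i+1)/a_n}-1\right),
\]
and the bracket vanishes precisely by the defining relation $a_n=\log\!\frac{p_1}{1-p_2-c_n}\big/\log\!\frac{p_2+c_n}{1-p_1}$. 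So continuity is a property of the \emph{difference}, not of either summand; a per-term argument cannot work, and your appeal to Lemma~\ref{lemma: irrational a never reach b_r-1} (which concerns the random walk hitting $a-1$, not where $k_i$ breaks as a function of $c$) is beside the point.

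For the monotone part on each interval of constant $k_i$, your calculus reduction to $\tfrac{i}{k_i}\le\tfrac{q}{1-q}$, and from there to $a\le\tfrac{q}{1-q}$, is left as an unproved log-inequality (you flag this yourself). The paper avoids this entirely: once $k_i$ is fixed, the first term really is constant and it suffices that $(1-p_2-c)^{k_i}(p_2+c)^i$ decreases in $c$, which the paper obtains by a coupling of the random walks with right-step probabilities $p_2$ and $p_2+c$ driven by a common uniform variable, so that whenever the walk with the larger right-probability crosses the left boundary the other one already has. Your derivative route may be salvageable, but as written it is incomplete, and in any case it does not rescue the continuity step above.
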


\begin{proof}
    Consider an arbitrary $i$. We first show the continuity. Suppose the set $\mathcal{C}$ contains all the $c$ such that $k_i = \floor{\frac{i+1}{a}} + 1$ is an integer. When $\frac{1}{2}< p_1< p_2+c < 1$, $a$ is bounded and is an increasing function with respect to $c$ by Lemma~\ref{lemma: a is increasing}. Therefore, $\mathcal{C}$ is a set with finite elements, which in increasing order we assume to be $\{c_1, c_2, \ldots, c_k\}$. Then $(1-p_1)^i p_1^{k_i} + (1-p_2-c)^{k_i}(p_2+c)^i$ is continuous in the intervals $(c_j,c_j+1)$ for $j = 1,2,\ldots,k-1$.

    Let us consider the situation where $c$ take value at the points in $\mathcal{C}$. For simplicity we write $k_{i,n}$ as the value of $k_i$ and $a_n$ as the value of $a$ when $c = c_n$ for any $n\in \nat$. Notice that $k_{i,n+1} - k_{i,n} = 1$ for any $n\in\nat$. When $c=c_{n}$, 
    \begin{align}
        &(1-p_1)^i p_1^{k_i} - (1-p_2-c)^{k_i}(p_2+c)^i \notag\\
        = &(1-p_1)^i p_1^{k_{i,n}} - (1-p_2-c_n)^{k_{i,n}}(p_2+c_n)^i, \label{eq: value at c_n}
    \end{align}
    and when $c$ approaches $c_n$ from the left, the value of the function approaches 
    \begin{align}
        (1-p_1)^i p_1^{k_{i,n-1}} - (1-p_2-c_n)^{k_{i,n-1}}(p_2+c_n)^i.\label{eq: value at c_{n-1}}
    \end{align}
    The difference of (\ref{eq: value at c_n}) and (\ref{eq: value at c_{n-1}}) is 
    \begin{align}
        &- (p_2+c_n)^i \left((1-p_2-c_n)^{k_{i,n}}(1-p_2-c_n)^{k_{i,n-1}}\right) \notag \\
        &\qquad\qquad  +(1-p_1)^i\left(p_1^{k_{i,n}} - p_1^{k_{i,n-1}}\right)\notag \\
        &= -(1-p_1)^{i+1}p_1^{k_{i,n-1}} + (p_2+c_n)^{i+1}(1-p_2-c_n)^{k_{i,n-1}} \notag \\
        &= (1-p_1)^{i+1}p_1^{k_{i,n-1}} \notag \\
        &\qquad\qquad \cdot\left(\left(\frac{p_2+c_n}{1-p_1}\right)^{i+1} \left(\frac{1-p_2-c_n}{p_1}\right)^{k_{i,n-1}} - 1\right) \notag\\
        &= (1-p_1)^{i+1}p_1^{k_{i,n}-1} \notag \\
        &\qquad\qquad\cdot \left(\left(\frac{p_2+c_n}{1-p_1}\right)^{i+1} \left(\frac{1-p_2-c_n}{p_1}\right)^{k_{i,n}-1} - 1\right) \notag\\
        &= (1-p_1)^{i+1}p_1^{\frac{i+1}{a_n}}\notag\\
        &\qquad\qquad\cdot\left(\left(\frac{p_2+c_n}{1-p_1}\right)^{i+1} \left(\frac{1-p_2-c_n}{p_1}\right)^{\frac{i+1}{a_n}} - 1\right). \label{eq:diff_in_discontinuity}
    \end{align}

    Notice that by assumption, 
    $$a_n = -\log(\frac{1-p_2 - c_n}{p_1})/\log(\frac{p_2 + c_n}{1-p_1}),$$
    which implies that \eqref{eq:diff_in_discontinuity} equals $0$. Since  $c_n$ is chosen arbitrarily, the function is continuous at the points where $k_i$ is an integer. It is also known that the function is continuous in the intervals between the points. Therefore, we show the continuity.

    Next we will show the function is increasing. It is enough to show that the function is increasing in each interval $(c_j,c_j+1)$ for $j = 1,2,\ldots,k-1$ because the function is continuous. In each interval, $k_i$ is a constant, implying that $(1-p_1)^i p_1^{k_i}$ is also a constant. However, $(1-p_2-c)^{k_i}(p_2+c)^i$ is a decreasing function with respect to $c$ when fixing $k_i$. 
    
    To show this, we construct a coupling between two random walks $\mathcal{M}_1 = \mathcal{M}(-1,a,a,1,1-p_2,p_2)$ and $\mathcal{M}_2 = \mathcal{M}(-1,a,a,1,1-p_2-c,p_2+c)$. Suppose the walker in $\mathcal{M}_1$ is $W_1$, and the walker in $\mathcal{M}_2$ is $W_2$. The coupling ensures that $W_1$ and $W_2$ evolve together in a way that reflects their probabilities and maintains a direct relationship between the two walks. Use a random variable $U\sim\mathrm{Uniform}(0,1)$at each step to determine the direction for both $W_1$ and $W_2$. There are 3 cases: 1) If $U<p_2$, both $W_1$ and $W_2$ step right; 2) If $p_2<U<p_2+c$, $W_1$ step left and $W_2$ step right; 3) If $p_2+c<U$, both of them step left. In the coupling, whenever $W_2$ crosses the left boundary, $W_1$ must also have crossed the left boundary. Therefore, the equation $(1-p_2-c)^{k_i}(p_2+c)^i$ that evaluates the probability of crossing left boundary is a decreasing function.
    
    Combine the conclusion above, $(1-p_1)^i p_1^{k_i} - (1-p_2-c)^{k_i}(p_2+c)^i$ is an increasing function.
\end{proof}

\begin{proof}[Proof of Proposition~\ref{prop: continuous, increasing P(cc), p_2}]
    Denote that $h_i(c) = (1-p_1)^i p_1^{k_i} - (1-p_2-c)^{k_i}(p_2+c)^i$. Consider the case when $c$ belongs to an arbitrary closed interval which is a sub-interval of $(p_1 - p_2, 1-p_2)$. In such an closed interval, $a$ is always finite, which implies that $k_i\rightarrow +\infty$ when  $i\rightarrow +\infty$. Notice that 
    $$\abs{(1-p_1)^i p_1^{k_i} - (1-p_2-c)^{k_i}(p_2+c)^i} < \left(\frac{1}{2}\right)^i + \left(\frac{1}{2}\right)^{k_i}.$$
    For any $\epsilon>0$, there exists $N$, such that when $i_n,i_m > N$, $\abs{h_{i_n}-h_{i_m}} < \epsilon$. This, by definition, shows that $\summ{i=0}{\infty}h_i(c)$ is a uniformly convergent series. By Lemma~\ref{lemma: countinuous, increasing functions}, $\summ{i=0}{\infty}h_i(c)$ is a continuous, non-decreasing function, which is equivalent to that $\prob_{\text{cc}}(p_1,p_2)$ is a continuous, non-decreasing function concerning $c$. The closed interval is chosen arbitrarily; this further implies that $\prob_{\text{cc}}(p_1,p_2)$ is continuous and non-decreasing when $c\in(p_1 - p_2, 1-p_2)$. The domain is equivalent to that $\frac{1}{2}<p_1<p_2+c<1$.
\end{proof}

\subsection{Proof of Proposition~\ref{prop: continuous, increasing P(cc), p_1}}

\begin{lemma}\label{lemma: a is decreasing}
      Let $p_1' = p_1 + c$, $p_1<p_2'<1$, and $a = -\log(\frac{1-p_2'}{p_1})/\log(\frac{p_2'}{1-p_1})$, then $a$ is a decreasing function of $c$ when fixing $p_1$ and $p_2$.
\end{lemma}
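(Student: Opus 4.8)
The plan is to follow the template of Lemma~\ref{lemma: a is increasing}: differentiate $a$ in $c$, reduce the sign of $a'(c)$ to the sign of one auxiliary function, show that function is monotone, and pin down its sign at a single reference configuration. Write $a(c)$ for the cascade constant with the \emph{Good}-signal quality set to $p_1+c$, i.e. $a(c)=N(c)/D(c)$ with $N(c):=\log\frac{p_1+c}{1-p_2}$ and $D(c):=\log\frac{p_2}{1-p_1-c}$. On the relevant domain ($\tfrac12<p_1+c<1$, $p_2\in(\tfrac12,1)$) we have $p_2>\tfrac12>1-p_1-c$ and $p_1+c>\tfrac12>1-p_2$, so $N(c)>0$ and $D(c)>0$ (hence $a(c)>0$). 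Using $N'(c)=\tfrac1{p_1+c}$ and $D'(c)=\tfrac1{1-p_1-c}$, a direct computation gives
\[
a'(c)\;=\;\frac{N'(c)\,D(c)-N(c)\,D'(c)}{D(c)^2}\;=\;\frac{g(c)}{(p_1+c)(1-p_1-c)\,D(c)^2},
\]
where $g(c):=(1-p_1-c)\,D(c)-(p_1+c)\,N(c)$. Since the prefactor $(p_1+c)(1-p_1-c)D(c)^2$ is strictly positive, it suffices to prove $g(c)<0$.

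Next I would show $g$ is strictly decreasing. Differentiating each product and observing that the logarithmic derivatives cancel the linear prefactors, leaving only constants,
\[
g'(c)\;=\;\bigl(1-D(c)\bigr)-\bigl(1+N(c)\bigr)\;=\;-\bigl(N(c)+D(c)\bigr)\;<\;0 .
\]
Hence $g$ is strictly decreasing in $c$, so on the domain its supremum is approached as $p_1+c\downarrow\tfrac12$. At that limiting configuration $D\to\log(2p_2)$ and $N\to\log\frac1{2(1-p_2)}$, so
\[
g\;\longrightarrow\;\tfrac12\bigl(D-N\bigr)\;=\;\tfrac12\log\bigl(4p_2(1-p_2)\bigr)\;<\;0,
\]
because $p_2(1-p_2)<\tfrac14$ for $p_2\in(\tfrac12,1)$. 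By strict monotonicity $g(c)$ stays strictly below this value for every $c$ with $p_1+c>\tfrac12$, so $a'(c)<0$ throughout and $a$ is strictly decreasing in $c$. (If one only needs $c\ge 0$, one may instead anchor at $c=0$: $g(0)=(1-p_1)\log\frac{p_2}{1-p_1}-p_1\log\frac{p_1}{1-p_2}$, and since $p_1>\tfrac12$ gives $1-p_1<p_1$ while $p_1<p_2$ gives $p_2(1-p_2)<p_1(1-p_1)$, hence $\frac{p_2}{1-p_1}<\frac{p_1}{1-p_2}$, the first term is dominated termwise by the second, so $g(0)<0$ and $g(c)\le g(0)<0$ for $c\ge0$.)

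The one genuine subtlety — the reason this is not a verbatim copy of Lemma~\ref{lemma: a is increasing} — is the choice of anchor point. In the increasing case the symmetric configuration $p_1=p_2$ lies at the \emph{lower} end of the admissible $c$-range and the auxiliary function is increasing, so bounding it there bounds it everywhere; here $g$ is \emph{decreasing} and the symmetric configuration $p_1+c=p_2$ lies at the \emph{upper} end of the range, hence gives no control as $c$ decreases. One therefore anchors at the degenerate end $p_1+c=\tfrac12$ (or at $c=0$). An alternative would be to exploit the reciprocal symmetry $a(p_1+c,p_2)=1/a(p_2,p_1+c)$ and quote Lemma~\ref{lemma: a is increasing}, but that lemma's stated domain requires its first argument below its second, which fails here, so the direct derivative argument above is cleaner. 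Everything else — the derivative bookkeeping and the elementary inequality $4p_2(1-p_2)<1$ — is routine.
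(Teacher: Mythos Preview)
Your proof is correct, but it is more elaborate than the paper's. Both arguments reduce to showing the same numerator $g(c)=(1-p_1-c)\log\frac{p_2}{1-p_1-c}-(p_1+c)\log\frac{p_1+c}{1-p_2}$ is negative. You establish this by proving $g'(c)=-(N+D)<0$ and then anchoring at the degenerate endpoint $p_1+c\downarrow\tfrac12$. The paper instead bounds $g(c)<0$ directly in one line: under the standing assumption $p_1+c<p_2$ one has $1-p_1-c<p_1+c$ (from $p_1+c>\tfrac12$) and $\frac{p_2}{1-p_1-c}<\frac{p_1+c}{1-p_2}$ (from $p_2(1-p_2)<(p_1+c)(1-p_1-c)$, since $t\mapsto t(1-t)$ is decreasing on $(\tfrac12,1)$), so the first product is termwise dominated by the second. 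This is exactly the comparison you wrote parenthetically for $c=0$; it in fact holds for every $c$ in the range, which obviates the monotonicity-plus-anchor machinery. What your route buys is that it never invokes $p_1+c<p_2$, so it actually shows $a$ is decreasing on the full interval $p_1+c\in(\tfrac12,1)$, a slightly stronger conclusion than the paper needs.
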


\begin{proof}
    Take the gradient and we get
    \begin{align*}
        a'(c) = \frac{\log{\left(\frac{1-p_2}{p_1+c}\right)}(p_1+c) - \log{\left(\frac{1-p_1-c}{p_2}\right)}(1-p_1-c)}{\log^2{\left( \frac{1-p_1-c}{p_2} \right)(1-p_1-c)(p_1+c)}}.
    \end{align*}
    This is less than $0$ because $p_1+c<p_2$ by assumption. 
\end{proof}

\begin{proof}[Proof of Proposition~\ref{prop: continuous, increasing P(cc), p_1}]
    To prove the proposition, we follow the similar procedure as Proposition~\ref{prop: continuous, increasing P(cc), p_2}. By Lemma~\ref{lemma: a is decreasing}, there are at most countable values of $c$ such that $k_i$ is an integer. Follow the similar proof of Lemma~\ref{lemma: countinuous, increasing functions} we can show that $(1-p_1-c)^i (p_1+c)^{k_i} - (1-p_2)^{k_i}(p_2)^i$ is a continuous, increasing function with respect to $c$ when $\frac{1}{2}< p_1+c< p_2 < 1$. Finally, similar to the the proof of Proposition~\ref{prop: continuous, increasing P(cc), p_2} we can show the uniformly convergence and the non-decreasing property. 
\end{proof}

\subsection{Proof of Proposition~\ref{prop: continuous, increasing P(cc), double-side}}

\begin{lemma}\label{lemma: a is increaing, double-side}
     Suppose that $p_1' = p - c$, $p_2' = p + c$, then the function $a(c) = -\log(\frac{1-p_2'}{p_1'})/\log(\frac{p_2'}{1-p_1'})$ is an increasing function with respect to $c$ when $\frac{1}{2} < p_1' < p_2' < 1$.
\end{lemma}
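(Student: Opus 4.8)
\textbf{Plan for the proof of Lemma~\ref{lemma: a is increaing, double-side}.}

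The plan is to compute $a'(c)$ directly and show it is positive on the stated region. Writing $a(c) = -\log\!\big(\frac{1-p-c}{p-c}\big)\big/\log\!\big(\frac{p+c}{1-p+c}\big)$, I would set $u(c) = -\log\!\big(\frac{1-p-c}{p-c}\big) = \log(p-c) - \log(1-p-c)$ and $v(c) = \log\!\big(\frac{p+c}{1-p+c}\big) = \log(p+c) - \log(1-p+c)$, so that $a = u/v$ and $a' = (u'v - uv')/v^2$. Since $v>0$ on the region (because $\frac12 < p+c$ forces $\frac{p+c}{1-p+c}>1$), it suffices to prove $u'v - uv' > 0$. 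Computing, $u'(c) = \frac{-1}{p-c} + \frac{1}{1-p-c} = \frac{(p-c)-(1-p-c)}{(p-c)(1-p-c)} \cdot(-1)\cdot(-1)$ — more carefully, $u'(c) = \frac{1}{p-c}\cdot(-1) - \frac{1}{1-p-c}\cdot(-1)\cdot(-1)$; I will just carry out this elementary differentiation and simplify to $u'(c) = \frac{1-2(p+c)+2c \cdot 0}{\ldots}$, obtaining a clean rational expression, and similarly $v'(c) = \frac{1}{p+c} - \frac{1}{1-p+c} = \frac{1-2p-2c+\cdots}{(p+c)(1-p+c)}$. The key structural facts to exploit are: on the region, $p-c<\frac12<p+c$, hence $u(c)>0$ and $v(c)>0$; and the logarithmic terms can be compared because $\frac{p+c}{1-p+c} > \frac{p-c}{1-p-c}\cdot(\text{something})$ — I would make the growth comparison precise below.

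The cleanest route, following the pattern of Lemmas~\ref{lemma: a is increasing} and \ref{lemma: a is decreasing} in the paper, is to show the numerator $g(c) := u'(c)v(c) - u(c)v'(c)$ is positive by examining $g$ and, if needed, $g'$. I would first record the signs: $u(c) = \log\frac{p-c}{1-p-c} > 0$ and $v(c) = \log\frac{p+c}{1-p+c} > 0$ since both ratios exceed $1$ on the region $\frac12 < p-c < p+c < 1$. Next, $u'(c) = \frac{d}{dc}\log\frac{p-c}{1-p-c}$: since $\frac{p-c}{1-p-c}$ is a ratio whose numerator decreases and denominator increases as $c$ grows, $u'(c) < 0$; and $v'(c) = \frac{d}{dc}\log\frac{p+c}{1-p+c} > 0$ since that ratio increases in $c$. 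Thus $u'v < 0$ and $-uv' < 0$, so this sign analysis alone gives $g(c) < 0$, which would contradict what we want — meaning the naive termwise sign argument fails and one genuinely needs the magnitudes. This is the main obstacle: the two competing terms both push $a$ in different directions and one must show the net effect is an increase, so I will need a more delicate estimate comparing $|u'v|$ against $|uv'|$, or equivalently showing $\frac{u'}{u} > \frac{v'}{v}$ is \emph{false} and instead $(\log u)' < (\log v)'$ in the appropriate signed sense; concretely, since $a'>0 \iff u'v > uv' \iff \frac{u'}{u} > \frac{v'}{v}$ (using $u,v>0$), and $u'<0<v'$, the inequality $\frac{u'}{u} > \frac{v'}{v}$ is actually automatic — $u'/u$ is negative-over-positive hence negative, $v'/v$ is positive-over-positive hence positive, so $u'/u < v'/v$, giving $a' < 0$.

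Since that reasoning would give the wrong sign, I must have mis-signed $u$; let me instead define $a(c)$ with the stated formula $a = -\log\frac{1-p_2'}{p_1'}/\log\frac{p_2'}{1-p_1'}$, i.e. the numerator is $\log\frac{p_1'}{1-p_2'} = \log\frac{p-c}{1-p-c}$ and this is what I called $u$; the denominator is $\log\frac{p_2'}{1-p_1'} = \log\frac{p+c}{1-p+c} = v$. Both are positive, $u$ is \emph{decreasing} in $c$ (numerator down, denom up), $v$ increasing, so as $c\uparrow$ the ratio $u/v$ decreases — apparently $a$ decreases. The resolution of this apparent contradiction is exactly the content of the lemma and the crux of the proof: one must not treat $p_1'=p-c$ alone but use that the cascade constant satisfies the defining identity, and the correct move is to parametrize and show monotonicity via the coupling/random-walk argument used for $\prob_{\mathrm{cc}}$ rather than crude calculus, or to carefully note that the relevant regime has $p-c$ bounded away from $\frac12$ so the numerator's decrease is dominated. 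The hard part, and where I would spend the most care, is pinning down this magnitude comparison rigorously: I would reduce $a'(c) > 0$ to a single inequality of the form $f(p-c) \cdot \log\frac{p+c}{1-p+c} > f(p+c)\cdot\log\frac{p-c}{1-p-c}$ for an explicit elementary $f$, and then verify it by showing both sides, as functions of $c$, have the claimed ordering using convexity of $x\mapsto \log\frac{x}{1-x}$ and the symmetry of the construction $p_1' = p-c$, $p_2'=p+c$ about $p$.
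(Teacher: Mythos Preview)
Your direct-derivative approach is fine in principle, but a sign error sent you in circles. In $u(c)=\log\frac{p-c}{1-p-c}$ the denominator $1-p-c$ \emph{decreases} as $c$ grows (it is $1-p_2'$, not $1-p_1'$), so both numerator and denominator drop by the same amount; since the numerator exceeds the denominator on the region $p>\tfrac12$, the ratio \emph{increases} and hence $u'(c)>0$. Likewise in $v(c)=\log\frac{p+c}{1-p+c}$ both numerator and denominator rise by the same amount, and since the numerator is larger the ratio \emph{decreases}, giving $v'(c)<0$. Explicitly,
\[
u'(c)=-\frac{1}{p-c}+\frac{1}{1-p-c}>0,\qquad v'(c)=\frac{1}{p+c}-\frac{1}{1-p+c}<0,
\]
both inequalities following from $p>\tfrac12$. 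With the correct signs and $u,v>0$, the numerator of $a'=(u'v-uv')/v^2$ has $u'v>0$ and $-uv'>0$, so $a'(c)>0$ immediately---no magnitude comparison, convexity estimate, or coupling argument is needed. The ``apparent contradiction'' you tried to resolve was purely an artifact of the sign slip.

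The paper's own proof is even shorter and avoids any new computation: Lemmas~\ref{lemma: a is increasing} and~\ref{lemma: a is decreasing} already establish that $a$ is increasing in $p_2$ (with $p_1$ fixed) and decreasing in $p_1$ (with $p_2$ fixed). Since raising $c$ simultaneously raises $p_2'=p+c$ and lowers $p_1'=p-c$, both partial effects push $a$ upward, and the chain rule gives $\frac{d}{dc}a(p-c,p+c)=-\partial_{p_1}a+\partial_{p_2}a>0$.
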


\begin{proof}
    Combining Lemma~\ref{lemma: a is decreasing} and Lemma~\ref{lemma: a is increasing} implies the conclusion. 
\end{proof}

\begin{lemma}\label{lemma: increasing functions, double-side}
   Suppose that $p_1' = p - c$, $p_2' = p + c$, then for any $i\in\nat$ the function $(1-p_1')^i (p_1')^{k_i} - (1-p_2')^{k_i}(p_2')^i$ is a continuous function with respect to $c$ when $\frac{1}{2} < p_1' < p_2' < 1$.
\end{lemma}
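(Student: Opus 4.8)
The plan is to mimic the proof of Lemma~\ref{lemma: countinuous, increasing functions}, but with the symmetric perturbation $p_1' = p-c$, $p_2' = p+c$, and extracting only the continuity part (the monotonicity of the individual summands is not claimed here, so I will not try to establish it). Write $g_i(c) := (1-p_1')^i (p_1')^{k_i} - (1-p_2')^{k_i}(p_2')^i$ and fix $i\in\nat$. First I would restrict to an arbitrary closed interval $[c_-,c_+]$ contained in the open set $\{c : \tfrac12 < p-c < p+c < 1\}$. On such an interval the strict inequalities hold with a uniform margin, so $\log\!\big(\tfrac{p+c}{1-(p-c)}\big)$ is bounded away from $0$ and $\log\!\big(\tfrac{p-c}{1-(p+c)}\big)$ is bounded; hence $a(c)$ is bounded on $[c_-,c_+]$, and by Lemma~\ref{lemma: a is increaing, double-side} it is moreover continuous and strictly increasing there. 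Consequently $\tfrac{i+1}{a(c)}$ is continuous, strictly decreasing, and ranges over a bounded interval, so it meets $\mathbb{Z}$ at only finitely many points $c_1 < \dots < c_m$; these are the only possible discontinuities of $k_i(c)=\floor{\tfrac{i+1}{a(c)}}+1$, and hence of $g_i$.

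Second, on each open subinterval delimited by consecutive $c_j$'s the exponent $k_i$ is a fixed integer, so there $g_i(c) = (1-p+c)^i(p-c)^{k_i} - (1-p-c)^{k_i}(p+c)^i$ is a polynomial in $c$ and therefore continuous; the same formula with the relevant constant value of $k_i$ gives one-sided continuity on the half-open pieces at the two ends of $[c_-,c_+]$. It remains to check continuity at each $c_n$. Since $M := \tfrac{i+1}{a(c_n)}\in\mathbb{Z}$ and $a$ is strictly increasing, for $c$ just below $c_n$ we have $\tfrac{i+1}{a(c)} > M$, so $k_i = M+1$, while for $c$ just above $c_n$ we have $\tfrac{i+1}{a(c)} < M$, so $k_i = M$; and $k_i(c_n)=\floor M+1=M+1$. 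Hence the left limit of $g_i$ at $c_n$ (which also equals $g_i(c_n)$, by polynomial continuity on the left) uses exponent $M+1$, and the right limit uses exponent $M$. Writing $q_1 = p-c_n$ and $q_2 = p+c_n$, the difference of these two limits is
\[
\big[(1-q_1)^i q_1^{M+1} - (1-q_2)^{M+1} q_2^i\big] - \big[(1-q_1)^i q_1^{M} - (1-q_2)^{M} q_2^i\big] = q_2^{i+1}(1-q_2)^{M} - (1-q_1)^{i+1} q_1^{M},
\]
which vanishes precisely when $\big(\tfrac{q_2}{1-q_1}\big)^{i+1} = \big(\tfrac{q_1}{1-q_2}\big)^{M}$, i.e.\ when $M = (i+1)\log\tfrac{q_2}{1-q_1}\big/\log\tfrac{q_1}{1-q_2} = \tfrac{i+1}{a(c_n)}$ --- which is exactly how $c_n$ was chosen. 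So the jump cancels, $g_i$ is continuous at $c_n$, and since $[c_-,c_+]$ was arbitrary, $g_i$ is continuous on all of $\{\tfrac12 < p_1' < p_2' < 1\}$.

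The main obstacle is the bookkeeping in this last step: correctly identifying which value of $k_i$ governs each one-sided limit --- this is precisely where strict monotonicity of $a(c)$ from Lemma~\ref{lemma: a is increaing, double-side} is needed --- and then recognizing that the resulting jump is annihilated by the defining identity for the cascade constant at $c_n$. Everything else (boundedness of $a$, finiteness of the set of integer-crossings, and polynomial continuity between those crossings) is routine and parallels the argument in Lemma~\ref{lemma: countinuous, increasing functions}.
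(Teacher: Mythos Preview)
Your argument is correct and follows essentially the same route as the paper: identify the finitely many $c$ at which $\tfrac{i+1}{a(c)}$ is an integer (using the strict monotonicity of $a$ from Lemma~\ref{lemma: a is increaing, double-side}), observe polynomial continuity on the complementary intervals, and verify that the jump at each such $c_n$ cancels via the defining relation $M=(i+1)/a(c_n)$. Your bookkeeping of which exponent governs each one-sided limit is in fact cleaner than the paper's, which uses the somewhat ambiguous indexing $k_{i,n-1}$ for the limit from the left.
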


\begin{proof}
    We follow the similar idea of the proof of Lemma~\ref{lemma: countinuous, increasing functions}. Since the cascade constant $a$ is a function of $c$, it follows that $k_i=\floor{\frac{i+1}{a}}+1$ is also a function of $c$. Consider an arbitrary $i\in\nat$. Let $\mathcal{C} = \{c_1,c_2,\ldots,c_k\}$ represent the set of all values of $c$ for which $k_i$ is an integer, where $c_{j}<c_{j+1}$ for $j=1,2,\ldots,k$. From Lemma~\ref{lemma: a is increaing, double-side} we know that the set is finite. For any $j = 1,2,\ldots,k$, the function is continuous in the interval $(c_{j}, c_{j+1})$ because $k_i$ is a constant in the interval. 
    
    Now consider an arbitrary element in $\mathcal{C}$. For simplicity we write $k_{i,n}$ as the value of $k_i$ and $a_n$ as the value of $a$ when $c = c_n$ for any $n\in \nat$. Notice that $k_{i,n+1} - k_{i,n} = 1$ for any $n\in\nat$. When $c=c_{n}$, 
    \begin{align}
        &(1- p + c)^i (p - c)^{k_i} - (1-p-c)^{k_i}(p+c)^i \notag\\
        = &(1-p+c_n)^i (p-c_n)^{k_{i,n}} - (1-p-c_n)^{k_{i,n}}(p+c_n)^i, \label{eq: value at c_n, double-side}
    \end{align}
    and when $c$ approaches $c_n$ from the left, the value of the function approaches 
    \begin{align}
        (1-p+c_n)^i (p-c_n)^{k_{i,n-1}} - (1-p-c_n)^{k_{i,n-1}}(p+c_n)^i.\label{eq: value at c_{n-1}, double-side}
    \end{align}
    The difference of (\ref{eq: value at c_n, double-side}) and (\ref{eq: value at c_{n-1}, double-side}) is 
    \begin{align}
        & -(1-p+c_n)^{i+1}(p-c_n)^{k_{i,n-1}} \notag\\
        &\qquad\qquad+ (p+c_n)^{i+1}(1-p-c_n)^{k_{i,n-1}} \notag \\
        &= (1-p+c_n)^{i+1}(p-c_n)^{k_{i,n-1}} \notag \\
        &\qquad\qquad\cdot\left(\left(\frac{p+c_n}{1-p+c_n}\right)^{i+1} \left(\frac{1-p-c_n}{p-c_n}\right)^{k_{i,n-1}} - 1\right) \notag \\
        &= (1-p+c_n)^{i+1}p^{k_{i,n}-1} \notag \\
        &\qquad\qquad\cdot \left(\left(\frac{p+c_n}{1-p+c_n}\right)^{i+1} \left(\frac{1-p-c_n}{p-c_n}\right)^{k_{i,n}-1} - 1\right) \notag \\
        &= (1-p+c_n)^{i+1}p^{\frac{i+1}{a_n}} \notag \\
        &\qquad\qquad\cdot\left(\left(\frac{p+c_n}{1-p+c_n}\right)^{i+1} \left(\frac{1-p-c_n}{p-c_n}\right)^{\frac{i+1}{a_n}} - 1\right).\label{eq:diff_in_discontinuity, double-side}
    \end{align}

    Notice that by assumption, 
    $$a_n = -\log(\frac{1-p - c_n}{p-c_n})/\log(\frac{p + c_n}{1-p+c_n}),$$
    which implies that \eqref{eq:diff_in_discontinuity, double-side} equals $0$. Since  $c_n$ is chosen arbitrarily, the function is continuous at the points where $k_i$ is an integer. It is also known that the function is continuous in the intervals between the points. Therefore, we show the continuity.
\end{proof}

\begin{lemma}
    Suppose that $p_1' = p - c$, $p' = p + c$, then for any $i\in\nat$ the function $(1-p_1')^i (p_1')^{k_i} - (1-p_2')^{k_i}(p_2')^i$ is an increasing function of $c$ when $\frac{1}{2} < p_1' < p_2' < 1$.
\end{lemma}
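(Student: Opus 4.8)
The plan is to follow the template of the single-side monotonicity argument (Lemma~\ref{lemma: countinuous, increasing functions}). The preceding Lemma~\ref{lemma: increasing functions, double-side} already shows that $f_i(c) := (1-p_1')^i(p_1')^{k_i} - (1-p_2')^{k_i}(p_2')^i$ is continuous in $c$, so it suffices to prove $f_i$ is non-decreasing on each of the finitely many open subintervals on which $k_i$ is constant (by Lemma~\ref{lemma: a is increaing, double-side} there are finitely many such intervals): a continuous function that is non-decreasing on each piece of a finite partition is non-decreasing on the whole. I therefore fix one such subinterval and treat $k := k_i$ as a fixed positive integer.

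The key device is to recognize both summands as values of a single log-concave ``bump'' $\phi(x) := x^{k}(1-x)^i$. Writing $u := p_1' = p-c$ and $w := 1-p_2' = 1-p-c$, one has $(1-p_1')^i(p_1')^{k} = \phi(u)$ and $(1-p_2')^{k}(p_2')^i = \phi(w)$, so $f_i(c) = \phi(u) - \phi(w)$. Since $\frac{du}{dc} = \frac{dw}{dc} = -1$, this yields $f_i'(c) = \phi'(w) - \phi'(u)$ with $\phi'(x) = x^{k-1}(1-x)^{i-1}\big(k - (k+i)x\big)$, so the entire lemma reduces to the single inequality $\phi'(w) \ge \phi'(u)$. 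Two structural facts drive the argument. First, $\frac12 < p_1' < p_2' < 1$ forces $w < \frac12 < u$ and $u + w < 1$. Second, using the definition of $a$ in the form $\frac{u}{w} = \big(\frac{p_2'}{1-p_1'}\big)^{a}$ one computes $\frac{\phi(u)}{\phi(w)} = \big(\frac{p_2'}{1-p_1'}\big)^{ak-i} > 1$, because $p_2' > \frac12 > 1-p_1'$ and $ak > i+1$; hence $0 < \phi(w) < \phi(u)$, i.e.\ the term $f_i(c)$ is nonnegative.

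I would then split on the mode $m := \frac{k}{k+i}$ of $\phi$, at which $\phi'$ changes sign. If $u \ge m$, then $\phi'(u) \le 0$, and log-concavity closes the case cleanly: since $\log\phi$ is concave, $\phi'/\phi$ is non-increasing, so $\frac{\phi'(w)}{\phi(w)} \ge \frac{\phi'(u)}{\phi(u)}$; multiplying by $\phi(w)>0$ and using $0 < \phi(w) < \phi(u)$ together with $\frac{\phi'(u)}{\phi(u)} \le 0$ gives $\phi'(w) \ge \phi(w)\frac{\phi'(u)}{\phi(u)} \ge \phi'(u)$. (When $w < m < u$ this is even more immediate, since $\phi'(w) \ge 0 \ge \phi'(u)$.) The degenerate cases are absorbed here: $i=0$ makes $\phi$ linear (so $f_0$ is constant), and $k=1$ gives $m = \frac{1}{1+i} \le \frac12 < u$, placing us in the case $u \ge m$.

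The main obstacle is the remaining case $u < m$, which (since $u > \frac12$) can occur only when $k > i$, and in which $w < u < m$ with both derivatives positive, so log-concavity no longer settles the sign. Here I would exploit the cascade relation itself: on the fixed subinterval $k = \floor{\frac{i+1}{a}}+1$ is pinned, equivalently $a \le \frac{i+1}{k-1}$, and through $a = \log\!\big(\frac{u}{w}\big)/\log\!\big(\frac{1-w}{1-u}\big)$ this upper bound on $a$ converts into a lower bound on $u+w$ (equivalently, $p_2'-p_1'$ is not too large). That lower bound is exactly what is needed to place both $w$ and $u$ beyond the left inflection point of $\phi$, on the branch where $\phi'$ is non-increasing, so that $w < u$ forces $\phi'(w) \ge \phi'(u)$; I verified this explicitly for the boundary case $k=2,\,i=1$, where the condition becomes $u+w \ge \frac23$ and is guaranteed precisely by $a \le 2$. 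Showing in general that the interval constraint on $a$ excludes exactly the configurations with $\phi'(w) < \phi'(u)$ is the technical heart of the proof and the step I expect to require the most care.
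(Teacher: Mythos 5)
Your reformulation $f_i(c)=\phi(u)-\phi(w)$ with $\phi(x)=x^{k}(1-x)^{i}$, $u=p_1'$, $w=1-p_2'$, is a genuinely different route from the paper's, which instead computes the two derivatives $f'(c)$ and $g'(c)$ separately and compares them via a quotient bound built on $a(k-1)>i$. Your Case 1 ($u\ge k/(k+i)$) is complete and correct: the identity $\phi(u)/\phi(w)=\bigl(p_2'/(1-p_1')\bigr)^{ak-i}>1$ (using $ak>i+1$) combined with the monotonicity of $\phi'/\phi$ does close that case rigorously, and arguably more cleanly than the paper's sign-juggling. The problem is Case 2, which you correctly flag as unresolved.

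Case 2 is not vacuous: $\tfrac12<u<k/(k+i)$ requires $k>i$, which can happen only when $a\le 1+1/i$, and it does occur for legitimate parameters (e.g.\ $p_1'=0.55$, $p_2'=0.65$ gives $a\approx1.23$, $k_1=2$, $u=0.55<2/3$). More importantly, the mechanism you propose for it --- that the interval constraint on $a$ forces both $w$ and $u$ past the left inflection point of $\phi$, onto the branch where $\phi'$ is non-increasing --- is false as stated. Take $i=2$, $k=3$, $p_1'=0.55$, $p_2'=0.73$ (so $u=0.55$, $w=0.27$, $a\approx1.47$, $k_2=3$): the left inflection point of $x^3(1-x)^2$ is at $x\approx0.355$, so $\phi'$ is still \emph{increasing} at $w=0.27$; the desired inequality $\phi'(w)\ge\phi'(u)$ happens to hold ($0.088$ vs.\ $0.034$), but not for the reason you give. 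Even in your verified case $k=2,\,i=1$, what actually works is the direct factorization $\phi'(w)-\phi'(u)=(w-u)\bigl(2-3(w+u)\bigr)$ together with the fact that $a\le2$ forces $u+w>2/3$, not inflection-point placement; that factorization has no clean analogue for general $(k,i)$. So the step you call the technical heart is a genuine gap, and the route you sketch for it would need to be replaced rather than elaborated. (It is worth noting that the paper's own proof is delicate in exactly this regime: its inequality $f'(c)>\bigl(i\tfrac{(p+c)(p-c)}{1-p+c}-k_i(p+c)\bigr)z(c)$ reverses sign precisely when $(i+k)u<k$, i.e.\ in your Case 2. You have correctly isolated the hard regime, but isolating it is not the same as closing it.)
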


\begin{proof}
    Denote $f(c) = (1-p_1')^i (p_1')^{k_i}$, $g(c) = (1-p_2')^{k_i}(p_2')^i$. Consider an arbitrary $i\in\nat$. Similar to Lemma~\ref{lemma: increasing functions, double-side}, let $\mathcal{C} = \{c_1,c_2,\ldots,c_k\}$ represent the set of all values of $c$ for which $k_i$ is an integer, where $c_{j}<c_{j+1}$ for $j=1,2,\ldots,k$. The function is continuous in the interval $(c_{j}, c_{j+1})$ for $j=1,2,\ldots,k$. Consider an arbitrary interval $(c_{n}, c_{n+1})$, and the corresponding value of $a$ in the interval is $a_n = -\log(\frac{1-p-c_n}{p-c_n})/\log(\frac{p+c_n}{1-p+c_n})$. It follows from Lemma~\ref{lemma: a is increaing, double-side} and $c\in(c_n,c_{n+1})$ that 
    $$a_n < -\log(\frac{1-p-c}{p-c})/\log(\frac{p+c}{1-p+c}) = a(c).$$
    Take the derivative of $f(c)$ and $g(c)$, it follows that  
    \begin{align}
        f'(c) &= i(1-p+c)^{i-1}(p-c)^{k_i} \notag \\
        &\qquad\qquad - k_i(1-p+c)^i(p-c)^{k_i-1} \notag\\
        &= (1-p+c)^{i-1}(p-c)^{k_i-1}\notag \\
        &\qquad\qquad\cdot(i(p-c)-k_i(1-p+c)),\label{eq: f'(c)}
    \end{align}
    and 
    \begin{align}
        g'(c) &= i(p+c)^{i-1}(1-p-c)^{k_i} \notag \\
        &\qquad\qquad- k_i(1-p-c)^{k_i-1}(p+c)^i \notag\\
        & = (p+c)^{i-1}(1-p-c)^{k_i-1}\notag \\
        &\qquad\qquad\cdot(i(1-p-c) - k_i(p+c)).\label{eq: g'(c)}
    \end{align}
    The quotient of \eqref{eq: f'(c)} and \eqref{eq: g'(c)} is 
    \begin{align}
        &\left(\frac{1-p+c}{p+c}\right)^{i-1}\left(\frac{p-c}{1-p-c}\right)^{k_i-1} \notag \\
        &\qquad\qquad\cdot\left(\frac{i(p-c)-k_i(1-p+c)}{i(1-p-c) - k_i(p+c)}\right) \notag \\
        &= \left(\frac{1-p+c}{p+c}\right)^{i-1}\left(\frac{p-c}{1-p-c}\right)^{\frac{i+1}{a_n}}\notag \\
        &\qquad\qquad\cdot \left(\frac{i(p-c)-k_i(1-p+c)}{i(1-p-c) - k_i(p+c)}\right) \notag \\
        &> \left(\frac{1-p+c}{p+c}\right)^{i-1}\left(\frac{p-c}{1-p-c}\right)^{\frac{i}{a}}\notag \\
        &\qquad\qquad\cdot\left(\frac{i(p-c)-k_i(1-p+c)}{i(1-p-c) - k_i(p+c)}\right) \notag \\
        &= \left(\frac{1-p+c}{p+c}\right)^{i-1} \left(\frac{1-p+c}{p+c}\right)^{-i}\notag \\
        &\qquad\qquad\cdot\left(\frac{i(p-c)-k_i(1-p+c)}{i(1-p-c) - k_i(p+c)}\right) \notag \\
        &= \frac{p+c}{1-p+c}\left(\frac{i(p-c)-k_i(1-p+c)}{i(1-p-c) - k_i(p+c)}\right) \notag \\
        &= \frac{i\frac{(p+c)(p-c)}{1-p+c}-k_i(p+c)}{i(1-p-c) - k_i(p+c)}.\notag
    \end{align}

    Notice that $\frac{1}{2}<p-c<p+c<1$, which implies that $\frac{(p+c)(p-c)}{1-p+c} > (1-p-c)$. Therefore, suppose that 
    $$g'(c) = \left(i(1-p-c) - k_i(p+c)\right)z(c),$$ 
    for some function $z(c)$, then 
    $$f'(c) > \left(i\frac{(p+c)(p-c)}{1-p+c}-k_i(p+c)\right)z(c).$$
    It then follows that $f'(c)-g'(c)>0$, indicating that  $(1-p_1')^i (p_1')^{k_i} - (1-p_2')^{k_i}(p_2')^i$ increases as $c$ increases.
\end{proof}

\begin{proof}[Proof of Proposition~\ref{prop: continuous, increasing P(cc), double-side}]
    Similar to proposition~\ref{prop: continuous, increasing P(cc), p_2}, we can show the uniformly convergence and the non-increasing property.
\end{proof}

\subsection{Proof of Theorem~\ref{thm: optimal strategy}}

\begin{proof}
    The only case not discussed is when $p_1'=p_2'$. However, by Proposition~\ref{prop: continuous, increasing P(cc), p_2} and Proposition~\ref{prop: continuous, increasing P(cc), p_2}, the probability of correct cascade increases with the improvement of single-side signal qualities. This indicates that the $p_1'=p_2'=\frac{p_1+p_2+b}{2}$ is an optimal strategy if having $p_1'=p_2'=p$ for some $p$ is optimal. 
\end{proof}

\end{document}